\newtheorem{theorem}{Theorem}
\newtheorem{lemma}{Lemma}
\newtheorem{remark}{Remark}
\newtheorem{assumption}{Assumption}
\newtheorem{corollary}{Corollary}
\begin{document}

\title{Nonparametric estimation of conditional probability distributions using a generative approach based on conditional push-forward neural networks}

\author{Nicola Rares Franco$^1$, Lorenzo Tedesco$^2$}
\date{\small $^1$MOX, Department of Mathematics, Politecnico di Milano\\
$^2$Department of Economics, University of Bergamo}

\maketitle

\newcommand{\Xspace}{\mathcal{X}}
\newcommand{\Yspace}{\mathcal{Y}}
\newcommand{\XYspace}{\mathcal{S}}
\newcommand{\Aspace}{\mathcal{A}}
\newcommand{\hypothesis}{\mathscr{H}^\varepsilon}
\newcommand{\kpdf}{\kappa}
\newcommand{\ourmethodd}{CPFN}
\newcommand{\ourmethod}{\ourmethodd\,}
\newcommand{\nicolahere}{\begin{center}\textcolor{red}{\textnormal{--- Nicola: sono arrivato qui. ---}}\end{center}}
\newcommand{\review}[1]{\textcolor{black}{#1}}

\begin{abstract}
We introduce \emph{conditional push-forward neural networks} (CPFN), a generative framework for conditional distribution estimation. Instead of directly modeling the conditional density $f_{Y|X}$, CPFN learns a stochastic map $\varphi=\varphi(x,u)$ such that $\varphi(x,U)$ and $Y|X=x$ follow approximately the same law, with $U$ a suitable random vector of pre-defined latent variables. This enables efficient conditional sampling and straightforward estimation of conditional statistics through Monte Carlo methods. The model is trained via an objective function derived from a Kullback–Leibler formulation, without requiring invertibility or adversarial training. We establish a near-asymptotic consistency result and demonstrate experimentally that CPFN can achieve performance competitive with—or even superior to—state-of-the-art methods, including kernel estimators, tree-based algorithms, and popular deep learning techniques, all while remaining lightweight and easy to train.
\end{abstract}
\section{Introduction}


Estimating the conditional distribution of a response variable $Y\in\mathbb{R}^q$ given a vector of covariates $X\in\mathbb{R}^p$ is a timely problem of general interest. 
In fact, conditional probabilities are the finest tool by which one can describe the influence of a variable over another, providing a much richer perspective compared to, e.g., least squares regression. 
Indeed, assuming finite variances, it is well-known that the best approximation of $Y$ as a deterministic function of $X$ is given by the conditional expectation $\mathbb{E}[Y\,|\,X]$, in that $ \mathbb{E}[Y\,|\,X]=\Phi_*(X)$ for a suitable $\Phi_*:\mathbb{R}^d\to\mathbb{R}^q$, and
\begin{equation}
    \label{eq:least-squares}
    \mathbb{E}\|Y - \Phi_*(X)\|^2 =\inf_{\Phi}\mathbb{E}\|Y-\Phi(X)\|^2,
\end{equation}
the infimum being over all measurable maps $\Phi$ from $\mathbb{R}^d$ to $\mathbb{R}^q$. This representation, however, is very limited, as it does not capture the variability of $Y$ given $X$. Specifically, aside from the mean, it provides nearly no information about the distribution of $Y\,|\,X=x$, such as multimodality, skewness, variance, etc., whose description becomes essential whenever $Y$ is not univocally determined by $X$. In general, all of these aspects disappear completely in the least squares analysis, and they can only be recovered via the estimation of conditional probabilities \citep{hothorn2014conditional}.

In the literature there are several approaches for the estimation of conditional distributions, which can be subdivided into \emph{parametric} and \emph{nonparametric} ones. The first assume that $X$ and $Y$ are linked by a specific relationship ---e.g. linear---, or that they belong to a special class of random variables ---e.g., Gaussians. Then, the probability law of $Y\,|\,X$ is fully determined. Nonparametric techniques, instead, refrain from introducing highly restrictive assumptions, and favor representational richness over interpretability. Here, we shall focus our attention over this second class of approaches, specifically addressing the case of absolutely continuous random variables.

\paragraph{Literature review}
Historically, some of the first nonparametric approaches to conditional density estimation are those falling in the family of \emph{kernel conditional density estimation} methods (KCDE), see, e.g., \cite{fan1996,hyndman1996, holmes2010fast}. Closely related to Nadaraya–Watson estimators \citep{nadaraya1964estimating, watson1964smooth}, KCDE techniques leverage the fact that the conditional density of $Y\,|\,X=x$ can be expressed in terms of the joint density, $f_{YX}$, and the marginal density, $f_{X}$, via the well-known formula
\begin{equation}
    \label{eq:conditional ratio}
    f_{Y|X}(y\,|\,x)=\frac{f_{XY}(x,y)}{f_X(x)}.
\end{equation}
By integrating the machinery of kernel methods in the above, KCDE provide close form solutions to the problem of conditional density estimation. However, KCDE techniques also have their shortcomings. For instance, KCDE can encounter significant challenges when $d+q$ is fairly large, which limits its applicability in high-dimensional settings. Moreover, the implementation of KCDE requires setting several hyperparameters, such as the \emph{bandwidth}, whose selection is often challenging and continues to be an active topic of research: see, e.g., \cite{bashtannyk2001bandwidth, konevcna2019maximum, zhao2025adaptive}.


Later improvements with respect to KCDE, such as local likelihood estimation \citep{hyndman2002} and nearest-neighbor kernel estimators \citep{hall2004} mitigate some bias–variance trade-offs but remain limited in high-dimensional settings. Alternative approaches, including least-squares conditional density estimation \citep{sugiyama2010conditional} and direct conditional density ratio estimation \citep{kanamori2012}, bypass explicit density estimation by optimizing a surrogate risk for the conditional density ratio, achieving better scalability but still relying on Gaussian basis expansions.
\\\\
In recent years, in opposition to kernel methods, tree-based and histogram-based estimators have gained renewed attention for their computational advantages. Early works, such as \cite{cousins2019cadet}, adapted decision trees to conditional density estimation using Gaussian-leaf models, but their unimodal assumption limited flexibility. More recently, \cite{yang2024conditional} proposed a fully nonparametric tree-based approach using histogram-valued leaves, optimizing splits through a minimum description length criterion. Ensemble methods, such as \cite{pospisil2018rfcde}, combine random forests with kernel smoothing to obtain flexible yet structured estimates.

However, while these methods can be very attractive for their simplicity and flexibility, they can also be very sensitive to discretization and binning choices, which can pose significant challenges, particularly in high-dimensional or data-limited settings. Furthermore, their effective interpretability, often attributed to feature importance rankings or partial dependence plots, remains a subject of debate. In fact, tree-based approaches tend to be sensitive to variations in the training data or hyperparameter choices, which may influence derived interpretations \citep{huang2016parameter}. 


\;\\
An alternative perspective is brought by transformation 
methods. 
By leveraging the principles underlying the inverse transform method \citep{devroye2006nonuniform},  \emph{conditional transformation models} \citep{hothorn2014conditional, shen2020conditional, baumann2021deep} focus on approximating the conditional cumulative distribution function (CDF) rather than the conditional density. To this end, they parametrize the conditional CDF in terms of suitable $x$-dependent transformations. 
These methods, however, remain limited to 
univariate responses variables.

A similar idea stands at the core of \emph{conditional normalizing flows} \citep{winkler2019learning}, a deep learning framework that leverages $x$-dependent invertible transformations to map arbitrary conditional densities onto Gaussian distributions and viceversa. Although entirely black-box, conditional normalizing flows have proven remarkably accurate, with state-of-the-art performances across many different tasks. 
However, they also entail some limitations. In fact, the architectural constraints imposed by the invertibility assumption can significantly limit their design, and naive implementations can result in elevated computational costs. Simpler, more scalable, variants of normalizing flows exists, see e.g. \cite{dinh2015nice} and \cite{dinh2017density},  but their expressiveness is often limited, meaning that one may need very deep architectures to represent complicated distributions \citep{kobyzev2020normalizing}.

It is worth mentioning that normalizing flows share interesting connections with other approaches in the literature as well. For instance, their training process entails minimizing a negative log-likelihood functional—an idea already present in early works such as mixture density networks \citep{bishop1994}.
Another important aspect is that conditional normalizing flows can be regarded as \emph{generative models}, a broad family encompassing various architectures, including conditional variational autoencoders \citep{sohn2015learning} and generative adversarial neural networks \citep{goodfellow2014generative, song2025wasserstein}. We argue that these two components ---negative log-likelihood minimization and generative capability--- are key, reason for which we shall include them in our newly proposed approach.

The first one, in fact, bridges modern machine learning techniques with the more traditional and theoretically grounded field of statistical learning, thereby providing a solid conceptual foundation. The second, instead, guarantees additional flexibility at inference time. In fact, even when one has access to an estimated conditional density—as in kernel-based methods—computing such quantities remains nontrivial, as it typically requires implementing appropriate quadrature rules for numerical integration. In contrast, generative models, while not yielding explicit conditional densities, allow direct sampling from the conditional distribution, making these estimations readily achievable through classical Monte Carlo methods—especially when the sampling procedure is computationally efficient.


\paragraph{Our contribution}
Building upon these considerations, we propose a novel nonparametric approach for the estimation of conditional distributions based on a generative paradigm. 
Thus, instead of approximating $f_{Y|X}$, our approach aims at constructing a generative model that can efficiently sample from the conditional law of ${Y\,|\,X}$. This allows for a comprehensive description of the relationship between the two variables, while simultaneously enabling the estimation of conditional statistics in the natural way ---including, in the univariate case, the computational of conditional quantiles of arbitrary levels.

To this end, our construction makes use of deep neural networks. However, differently from other existing techniques, our approach does not require introducing  auxiliary architectures ---as in adversarial methods---, nor does it impose strict constraints in the design phase ---as in normalizing flows. 
Furthermore, unlike most generative approaches—including recent scoring-rule–based methods \citep{pacchiardi2024probabilistic, bulte2025probabilistic}—our framework makes the conditional density explicit, recovering the familiar maximum-likelihood formulation and thus promoting interpretability and theoretical insight.
Our experiments show that the proposed approach can achieve state-of-the-art performances while remaining lightweight enough to be trained on standard laptop hardware.


\paragraph{A note on quantile regression}
The framework discussed in this work can also be connected to the rich literature on \emph{quantile regression}. The latter, in fact, aims to estimate the conditional quantile functions of a response variable $Y$ given predictors $X$. Specifically, when the response variable $Y$ is univariate (i.e., $d = 1$), given a quantile level $\tau \in (0,1)$, quantile regression seeks to estimate the value $q_{Y|X}(\tau)$ as a function of $X$ such that
$$
\mathbb{P}(Y \leq q_{Y|X}(\tau) \mid X) = \tau, 
$$
Quantile regression methods can also be divided into parametric and nonparametric ones. In particular, a classic parametric approach is to model
the relationship between
$X$ and $q_{Y|X}(\tau)$ as linear, an approach known as linear quantile regression. In contrast, nonparametric quantile regression, 
leverages kernel methods, ensemble-based models ---as in quantile regression forest---, and spline functions, allowing for smooth, data-adaptive approximations of the conditional quantile functions. These methods are  particularly useful in settings characterized by heteroskedasticity, skewness, or heavy-tailed noise, where mean-based models may fail to provide a complete picture \citep{koenker2005quantile}.
Spline methods, in particular, can efficiently represent intricate relationships while maintaining interpretability and smoothness ---see, e.g., \cite{meinshausen2006quantile, koenker1994quantile}. In general, all such techniques typically involve the minimization of a \emph{pinball loss} ---also known as quantile loss---, which serves to directly estimate the desired quantile levels.

For the more general purpose of estimating conditional probabilities, however, traditional quantile regression methods entail some inherent limitations. In fact, most approaches require  fixing the quantile level $\tau$ before fitting, which hinders a comprehensive exploration of the conditional distribution. Furthermore, simply fitting multiple models for varying $\tau$ does not completely solve the problem, as it fails to guarantee essential properties such as the monotonicity of the quantile map $\tau \mapsto q_{Y|X}(\tau)$ —thus motivating the integration of suitable \emph{ad hoc} corrections; see, e.g., \cite{wang2024generative}.

Additionally, extending quantile regression to the multivariate case ($d > 1$) is challenging, primarily because there is no natural ordering in higher dimensions, which rises multiple competing definitions of multivariate quantiles (e.g., component-wise, spatial, or directional, among others). On top of this, the non-differentiability of the pinball loss at target values and the complexity of modeling dependencies among outputs can hinder optimization, especially in high-dimensional settings. These factors significantly hinder optimization and make multivariate quantile regression both theoretically and practically difficult \citep{serfling2002quantile}.

\paragraph{Outline of the paper}
The paper is organized as follows. First, in Section~\ref{sec:method}, we presented our proposed approach, describing its implementation and underlying rationale. Then, in Section~\ref{sec:theory}, we derive some theoretical results supporting our construction, primarily proving a weak form of consistency. Things are then put into action in Sections~\ref{sec:simulation}-\ref{sec:test}, where we assess the performances of the proposed approach over simulated and real data, respectively. Finally, Section~\ref{sec:conclusions} offers concluding remarks and discusses potential directions for future research.

\review{
\begin{remark}
It has come to our attention that \cite{kumar2024likelihood} recently explored similar ideas when analyzing the theoretical properties of classical generative models, such as variational autoencoders. The authors, however, refrained from using their theoretical construction in the actual experiments, sticking to variational approximations of the negative loglikelihood ---such as the Evidence Lower Bound (ELBO)--- when training their generative models. In this sense, our proposed approach can be seen as the practical counterpart of an otherwise purely theoretical construction. Consequently, we find that a brief comparison with the work of Kumar et al. can be helpful in placing our contribution in context. 
For instance, both works discuss convergence results in the infinite sample size limit, addressing theoretical properties such as near-asymptotic consistency. However, while \cite{kumar2024likelihood} consider generative models affected by Gaussian noise, our analysis focuses on smooth perturbations with compact support. For this reason, during our theoretical exploration, we end up making slightly different assumptions and arrive at slightly different conclusions.\\\indent
As we anticipated, another notable difference lies in the practical implementation of the approach. Our construction trains the model by effectively minimizing the negative log-likelihood of the data, without the need to introduce variational approximations. 
This allows us to implement the generative model using a single neural network architecture. Conversely, by relying on an ELBO approximation, \cite{kumar2024likelihood} effectively replicate the architecture of a variational autoencoder, whose training requires the introduction of an auxiliary encoder network. Additionally, \cite{kumar2024likelihood} propose the use of dense and sparse networks for the generative model, whereas we rely on a low-rank architecture decoupling the conditioning variables and the latent stochasticity. Finally, while \cite{kumar2024likelihood} extend the theory by examining singular conditional distributions supported on low-dimensional manifolds, our work aims at providing a comprehensive analysis over real data, benchmarking the proposed approach against state-of-the-art techniques beyond CKDE, such as tree-based methods and normalizing flows.
\end{remark}
}

\section{Presentation of the proposed approach}
\label{sec:method}
We devote this section to the presentation of our proposed approach, describing its main constituents and discussing the underlying idea. In what follows, we assume the reader to be familiar with standard deep feedforward neural networks, see, e.g., Chapter VI of \cite{goodfellow2016deep}. 
We recall that the latter are maps $\varphi:\mathbb{R}^{a_0}\to\mathbb{R}^{a_\ell}$ obtained via the composition of multiple layers,
$$\varphi=L_\ell\circ\dots\circ L_1,$$
with each layer $L_k$ consisting of an affine transformation $T_k:\mathbb{R}^{a_{k-1}}\to\mathbb{R}^{a_k}$ and a (scalar) nonlinearity $\rho_k:\mathbb{R}\to\mathbb{R}$,
$$L_k(v)=\rho_k(T_kv).$$
Here, as custom in the machine learning community, the action of the nonlinearity over a vector input is intended componentwise. The layers $L_1,\dots L_{\ell-1}$ are referred to as \emph{hidden layers}, and they typically share the same activation function, $\rho_1=\dots=\rho_{\ell-1}$. Conversely, $L_\ell$ is known as the \emph{output layer}. Depending on the task, the output layer may or may not include a nonlinear activation function. 
The shift and the linear operator defining the affine transformations $T_k$, instead,  are typically represented with a so-called bias vector, $b_k\in\mathbb{R}^{a_k}$, and a weight matrix, $W_k\in\mathbb{R}^{a_k\times a_{k-1}}$, respectively.

\subsection{Conditional push-forward neural networks: general idea}
In order to  sample from the conditional distribution of $Y\,|\,X$, we shall construct a neural network architecture 
$$\varphi:\mathbb{R}^d\times\mathbb{R}^q\to\mathbb{R}^q,$$
accepting two inputs, $\varphi=\varphi(x,u)$. The first one corresponds to the value attained by $X$, whereas the second one is random and serves to model the stochasticity in $Y$ that cannot be explained through $X$ alone. Precisely, the idea is to construct $\varphi$ such that for all $x$,
$$Y\,|\,X=x\quad\quad\text{and}\quad\quad\varphi(x,U)$$
follow approximately the same probability distribution, where $U\in\mathbb{R}^q$ is a suitable random vector, independent of $X$ and $Y$. Ideally, $U$ should be designed so that different realizations can be sampled easily. For instance, $U\sim\mathcal{U}[0,1]^q$ could be either uniform or $U\sim\mathcal{N}(0,\mathbf{I}_q)$ standard Gaussian.

In other words, denoting by $\mathbb{P}_{Y|X=x}$ the conditional law of $Y\,|\,X$ and by $\mathbb{Q}$ the probability distribution of $U$, our proposed approach consists in constructing a neural network $\varphi$ such that
$$\mathbb{P}_{Y|X=x}\approx\varphi(x,\cdot)\,\sharp\,\mathbb{Q},$$
where $\sharp$ denotes the push-forward operator. For this reason, we shall call such $\varphi$ a \textit{conditional push-forward neural network} (CPFN). 


The idea is inspired by basic results in the theory of Optimal Transport \citep{villani2008optimal} concerning \emph{couplings}: see, e.g., the Knothe--Rosenblatt rearrangement and measurable isomorphisms. More precisely, for each $x \in \mathbb{R}^d$, these results ensure that if $Y \mid X=x$ and $U$ are absolutely continuous, then there exists a transport map $\varphi_x : \mathbb{R}^d \to \mathbb{R}^d$ such that $\varphi_x(U)$ has the same distribution as $Y \mid X=x$. Further arguments show that in most cases, as with Knothe--Rosenblatt rearrangements, the transport map can be chosen to depend measurably on $x$, which naturally motivates the reformulation $\varphi_x = \varphi(x, \cdot)$. 
Thus, conditional push-forward neural networks can be interpreted as approximations of suitable transport maps. 

Clearly, given a dataset $\{(x_i,y_i)\}_{i=1}^{n}$ of independent and identically distributed (\textit{iid}) observations, it remains to specify how $\varphi$ is actually implemented and trained. We address these aspects in the next subsection.

\subsection{Model design and training}
\label{subsec:training}
We now come to the practical implementation of CPFNs. Specifically, we shall discuss how to design and train such architectures. 

\paragraph{Design} Instead of constructing a single architecture from $\mathbb{R}^{d+q}\cong\mathbb{R}^d\times\mathbb{R}^q\to\mathbb{R}^q$, we shall rely on a separation of variables approach. Let $r\ge1$. The idea is to construct two neural network architectures
$$\phi:\mathbb{R}^d\to\mathbb{R}^{r\times q}\quad\text{and}\quad\psi:\mathbb{R}^q\to\mathbb{R}^{r\times q}$$
accepting $x\in\mathbb{R}^d$ and $u\in\mathbb{R}^q$ as inputs, respectively. Then, the generative model can be assembled via the formula
\begin{equation}
\label{eq:separation-of-variables}\varphi(x,u)=\left[\sum_{i=1}^{r}\phi_{i,1}(x)\psi_{i,1}(u),\dots,\sum_{i=1}^{r}\phi_{i,q}(x)\psi_{i,q}(u)\right]^\top,\end{equation}
where $\phi_{i,j}$ and $\psi_{i,j}$ correspond to the $i,j$ output of $\phi$ and $\psi$, respectively. 

The rationale for adopting this design is twofold. First, the inputs $ x $ and $ u $ may differ significantly in nature---for example, when their respective dimensions or statistical properties vary widely (e.g., $ d \gg q $ or vice versa). Processing them separately allows the network to form cleaner, more specialized representations before combining them, rather than entangling heterogeneous features from the outset. Second, the formulation in \eqref{eq:separation-of-variables} enhances interpretability, as it explicitly expresses the output $ Y $ as a linear combination of $ x $-dependent functions with random coefficients. Notably, if $ \psi $ is constant, \eqref{eq:separation-of-variables} becomes independent of $ u $, corresponding to the degenerate case where $ Y $ reduces to a deterministic function of $ X $.

In this work, we shall construct $\phi$ and $\psi$ using conventional deep feed forward architecture, assuming that $d$ and $q$ are not too large ---namely, $d,q\le 100.$ Higher-dimensional problems may require the use of more sophisticated models, such as convolutional and sparse layers, which are currently beyond the scope of this work.

The \emph{rank} hyperparameter, $r$, models the interaction between $x$ and $u$, and it can be tuned based on the problem at hand. Depth and width of the neural networks architectures, instead, can be chosen following the guidelines available in the deep learning literature, see e.g. \cite{guhring2021approximation} and \cite{adcock2021gap}. For instance, in our experiments we consider networks with 3 hidden layers of constant width, $l=50$, and a corresponding rank of comparable size, $r=50.$

\paragraph{Training} 
The training phase is the crucial stage that transforms $\varphi$ into a conditional push-forward neural network. Let $\{(x_i,y_i)\}_{i=1}^{n}\subset\mathbb{R}^d\times\mathbb{R}^q$ be a suitable training set, consisting of $n$ \textit{iid} realizations of $(X,Y)$. Let $\varphi_\theta:\mathbb{R}^d\times\mathbb{R}^q\to\mathbb{R}^q$ be a CPFN with submodules $\phi_\theta$ and $\psi_\theta$, respectively. Here, $\theta$ denotes the collection of all weights and biases 
$$\theta=(W_1,b_1,\dots,W_\ell,b_\ell)$$
appearing in the definition of $\varphi_{\theta}$. Notice that, strictly speaking, $\phi_\theta$ and $\psi_\theta$ should depend on two different subvectors of $\theta$, respectively. Here, we do not emphasize this distinction in order to ease notation.

Let $U\in\mathbb{R}^q$ be an auxiliary random variable, absolutely continuous and independent of $X$ and $Y$, e.g., $U\sim\mathcal{U}[0,1]^q$ or $U\sim\mathcal{N}(0,\mathbf{I}_q)$. The latter will be used to capture the variability in $Y$ that cannot be explained by $X$ alone ---in similar works, in fact, the latter is also referred to as \emph{explanatory variable}.
The idea is to leverage the training data in order to tune $\theta$ such that $\varphi_\theta(X, U)$ and $Y\,|\,X$ have approximately the same distribution.

To this end, we shall define a suitable loss function, whose purpose is to enforce the desired behavior of the generative model. Rather than presenting the final expression directly, here, we shall derive it through a sequence of heuristic arguments, which serve to clarify the underlying rationale. A more rigorous mathematical justification of this heuristic derivation is provided in Section~\ref{sec:theory}.
\\\\
To fix the ideas, let $U\sim\mathcal{U}[0,1]^q$.
Assume that the pair $(X,Y)$ is absolutely continuous, with density $f_{XY}$, so that for every $x$ in the support of $X$, the random variable $Y\,|\,X=x$ follows a probability law $\mathbb{P}_{Y|X=x}$ with density $f_{Y|X}(\cdot\,|\,x)=f_{XY}(x,\cdot)/f_{X}(x).$ 

To start, we notice that the generative model $\varphi_\theta(x,U)$ is not guaranteed to admit a density, which makes the comparison between the two distributions trickier. This is because, in contrast to normalizing flows, our construction makes no invertibility assumptions in the definition of the generative model $\varphi_\theta(x,\cdot)$, which does not allow us to apply the change of variables formula.
To overcome this difficulty, we shall introduce a hyperparameter $\varepsilon>0$, arbitrarily small, and an auxiliary random variable $Z$, independent of $(X,Y,U)$ with density $\kpdf:\mathbb{R}^q\to[0,+\infty).$ The idea is then to consider the following perturbed generative model
$$\varphi_\theta^\varepsilon(x,U)=\varphi_\theta(x,U)+\varepsilon Z$$
which, in contrast to the original one, is guaranteed to admit a density $f_{\theta,\varepsilon,x}$. The latter, in fact, is given by the convolution
\begin{equation}
    \label{eq:cpfn-density}
    f_{\theta,\varepsilon,x}(y)=\int_{[0,1]^q}\kpdf_\varepsilon(y-\varphi_\theta(x,u))du,
\end{equation}
where $\kpdf_\varepsilon(y)=\varepsilon^{-d}\kpdf(y/\varepsilon).$ In order to enforce the desired behavior of the generative model, we shall analyze the average discrepancy between the true conditional probability, $\mathbb{P}_{Y\,|\,X}$, and the predicted conditional law $\mathbb{P}_{\theta,\varepsilon,X}\sim f_{\theta,\varepsilon,X}.$ We measure such discrepancy using the Kullback–Leibler divergence, which brings us to consider the following error metric
$$E(\theta,\varepsilon)=\mathbb{E}_{X}\left[D_{\textnormal{KL}}\left(\mathbb{P}_{Y|X}\,||\,\mathbb{P}_{\theta,\varepsilon,X}\right)\right].$$
Expanding the above yields
\begin{align*}
E(\theta,\varepsilon)&=\mathbb{E}_{X}\left[\mathbb{E}_{Y|X}\left(\log \frac{f_{Y|X}(Y|X)}{f_{\theta,\varepsilon,X}(Y|X)}\right)\right]\\&=\mathbb{E}_{X}\left[\mathbb{E}_{Y|X}\left(\log f_{Y|X}(Y|X)\right)\right]-\mathbb{E}_{X}\left[\mathbb{E}_{Y|X}\left(\log f_{\theta,\varepsilon,X}(Y|X)\right)\right].\end{align*}
Since the first term is independent of $\theta$, we shall focus on the second one. By applying the tower property and a Monte Carlo approximation we obtain
$$-\mathbb{E}_{X}\left[\mathbb{E}_{Y|X}\left(\log f_{\theta,\varepsilon,X}(Y|X)\right)\right]=-\mathbb{E}\left[\log f_{\theta,\varepsilon,X}(Y|X)\right]\approx -\frac{1}{n}\sum_{i=1}^{n}\log f_{\theta,\varepsilon,x_i}(y_i),$$
which is essentially the negative log-likelihood of the data, measured according to the predicted density.  
Therefore, in order to push $\mathbb{P}_{\theta,\varepsilon, X}$ towards $\mathbb{P}_{Y|X}$, the above reasoning suggests training the generative model by minimizing the following loss function
\begin{equation}
\label{eq:integral-loss}
\mathscr{L}_{n}^{\varepsilon}(\theta)=-\frac{1}{n}\sum_{i=1}^{n}\log\left(\int_{[0,1]^q}\kpdf_\varepsilon(y_i-\varphi_\theta(x_i,u))du\right).
\end{equation}

In practice, in order to compute the integral term and stabilize the behavior of the logarithm near zero we replace the above with
\begin{equation}
\label{eq:true-empirical-loss}
\mathscr{L}_{n}^{\varepsilon,\delta,R}(\theta)=-\frac{1}{n}\sum_{i=1}^{n}\log\left[\delta + \frac{1}{R}\sum_{j=1}^{R}\kpdf_\varepsilon(y_i-\varphi_\theta(x_i,u_{i,j}))\right].
\end{equation}
where $\delta>0$ is a suitable hyperparameter, arbitrarily small, whereas $\{u_{i,j}\}_{j=1}^{R}$ are random points sampled uniformly from the unit cube $[0,1]^q$. The latter need not to be fixed and can be re-drawn at every iteration of the training procedure, fostering generalizability.

\paragraph{Summary} To summarize, given a dataset $\{(x_i,y_i)\}_{i=1}^n$ of $n$
\textit{iid} realizations of $(X,Y)$, the implementation of a CPFN entails:
\begin{itemize}
\item[\emph{(i)}] initializing a suitable architecture $\varphi_\theta:\mathbb{R}^{p}\times\mathbb{R}^{q}\to\mathbb{R}^q$ with submodules $\phi_\theta$ and $\psi_\theta$;
\item[\emph{(ii)}] defining a probability distribution $\mathbb{Q}$ over $\mathbb{R}^q$ to sample from (uniform, gaussian, etc.);
\item[\emph{(iii)}] fixing an auxiliary density, $\kpdf$, operating as a mollifier;
\item[\emph{(iv)}] choosing a bandwidth $\varepsilon>0$, an offset constant $\delta>0$, and a sample size $R$;
\item[\emph{(v)}] training the model by minimizing \eqref{eq:true-empirical-loss}.
\end{itemize}
Once the training phase is concluded the generative model is fully operational and it can be used to produce approximated samples of $Y\,|\,X.$ The latter can be exploited to explore the conditional law of $Y$ given $X$ and compute related quantities of interest.

\begin{remark}
\label{remark:optimization}
In practice, the bandwidth parameter $\varepsilon>0$ needs not to be fixed. Instead, it can be fine tuned during the training phase, by simultaneously minimizing \eqref{eq:true-empirical-loss} with respect to both $\theta$ and $\varepsilon.$ Finally, up to adapting the definition of $\kappa_\varepsilon$ in the obvious way, one can also use a vector of bandwidths $\varepsilon=[\varepsilon_1,\dots,\varepsilon_q]^\top$.   
\end{remark}

\begin{remark}
\label{remark:quadrature}
The collocation points used to approximate the integral in \eqref{eq:integral-loss} 
may be selected either randomly or deterministically, for instance through suitable 
quadrature rules. In the latter case, \eqref{eq:true-empirical-loss} is slightly 
modified, with the sum over $j$ weighted by the corresponding quadrature coefficients. 
Importantly, in certain situations it is possible to design tailored quadrature rules 
such that \eqref{eq:integral-loss} is evaluated exactly rather than approximately. 
For example, when using popular nonlinearities such as the ReLU or LeakyReLU activation, 
the mapping $\varphi_\theta=\varphi_\theta(x,u)$ is piecewise quadratic in $u$. If, in addition, 
a piecewise polynomial kernel $\kpdf$ is employed---such as the Epanechnikov or Quartic 
kernel---then the function 
$
u \mapsto \kpdf_\varepsilon\!\left(y_i-\varphi_\theta(x_i,u)\right)
$
is piecewise polynomial for each $i=1,\dots,n$. Consequently, the integral can be computed 
exactly using only a finite number of collocation points $R$, positioned dependending on $i$ and $\theta$.
\end{remark}

\subsection{Technical details: data normalization and rescaling}
\label{subsec:technical}
We conclude this section with some technical considerations concerning data normalization and rescaling. As it is well-known, due to the nonconvexity and nonlinearity of the loss function, the training dynamics of neural network architectures can be highly sensitive to data normalization or re-parametrization. In our setting, we notice that any equivalent representation of the covariates is admissible, meaning that if $b:\mathbb{R}^d\to\mathbb{R}^d$ is bijective, then one can replace $X$ with $\tilde{X}=b(X)$ without the need for special corrections. Conversely, re-parametrizations of the response variable need some additional care.

Specifically, assume that $\tilde{Y}=g(Y)$ for some diffeomorphism $g:\mathbb{R}^q\to\mathbb{R}^q$, where $g$ has been introduced to map response values onto a more convenient scale. For instance, a typical example is when $q=1$ and $Y$ is supported on $[0,A]$, with $A\gg 1$; if the values of the response variable have a strong skewness towards the left, a common transformation is $g(y)=\log(1+y).$
Assume now that we constructed a CPFN over the pair $(X,\tilde{Y})$. By leveraging \eqref{eq:cpfn-density}, we now have access to an estimated conditional density $\hat{f}_{\tilde{Y}|X}\approx f_{\tilde{Y}|X}.$ Then, we can exploit the change of variables formula to automatically derive an estimate for $f_{Y|X}$, namely
$$\hat{f}_{Y|X}(y\,|\,x)=|\nabla g(y)|\hat{f}_{\tilde{Y}|X}(g(y)\,|\,x),$$
where $|\nabla g(y)|$ denotes the determinant of the Jacobian of $g$ at $y.$

More in general, even when nonlinear transformations are not required, we found that the typical $z$-score standardization, $z\mapsto (z-\mu)/\sigma$, with estimated mean and standard deviation, can prove beneficial when applied to both $X$ and $Y$. For this reason, although the evidence is purely empirical, we recommend performing this step before designing and training a CPFN: see, e.g., our implementation at \url{github.com/NicolaRFranco/CPFN}.

\section{Theoretical analysis}
\label{sec:theory}
The goal of this section is to establish a convergence result supporting our construction in Section~\ref{sec:method}. In particular, we shall prove that, up to a bias depending on the constants $\varepsilon$ and $\delta$, the distribution produced by the generative model can actually converge to the true conditional law as $n\to\infty$, under suitable assumptions. Since $\varepsilon$ and $\delta$ can be chosen arbitrarily small, we will say that conditional push-forward neural networks are \emph{nearly} asymptotically consistent. 
The formal statement is given in Section~\ref{subsec:thm}, while the corresponding 
proof is deferred to the Appendix~\ref{subsec:proof}. 

We mention that, for the purposes of this section, the analysis has been simplified slightly by reformulating the 
loss function in terms of continuous integrals, thereby avoiding the explicit 
use of collocation points. We emphasize, however, that as discussed in 
Remark~\ref{remark:quadrature}, this simplification is not particularly restrictive and does 
not entail a loss of generality.
\\

\noindent In what follows, we denote by $\mathbb{N}$ the set of all natural numbers, including 0. Given $k\in\mathbb{N}$ and a multi-index $\alpha\in\{0,\dots, k\}^d$, we let $D^\alpha$ denote the differential operator
$$\frac{\partial^{\alpha.}}{\partial x_{1}^{\alpha_1}\dots\partial x_{d}^{\alpha_d}},$$
where $\alpha.=\sum_{i=1}^d\alpha_i.$
Given a nonempty regular open\footnote{We recall that a regular open set is an open set coinciding with the interior of its closure.} set $A\subset\mathbb{R}^d$, we write $C^k(\bar{A};\;\mathbb{R}^q)$ for the space of all vector-valued maps $f=(f_1,\dots,f_q)$ such that each $f_i$ is $(k-1)$-times differentiable in $A$, with Lipschitz continuous derivatives, and for which the norm
$$\|f\|_{C^k(\overline{A};\;\mathbb{R}^q)}=\sum_{i=1}^{q}\left[\max_{0\le\alpha.\le k-1}\;\sup_{a}|D^\alpha f_i(a)| + \max_{\alpha.=k-1}\sup_{a\neq b}\frac{|D^{\alpha}f_i(a)-D^\alpha f_i(b)|}{|b-a|}\right]$$
is finite, the suprema being for $a,b\in A$. 

\subsection{A result of near asymptotic consistency}
\label{subsec:thm}
Let $X$ and $Y$ be absolutely continuous random variables with supports in $\Xspace\subset\mathbb{R}^d$ and $\Yspace\subset\mathbb{R}^q$, respectively. Let $\XYspace\subseteq\mathbb{R}^d \times \mathbb{R}^q$ denote the joint support of $(X,Y).$
We make the following regularity assumptions, whose motivation and discussion are deferred to Remark~\ref{remark:assumptions}.

\begin{assumption}\label{assumption:support}
    $\XYspace$ is a compact, convex subset with non empty interior. The joint density $f_{XY}$ is an element of $C^k(\XYspace)$, with $k>(d+q)/2$, and it satisfies
    $$f_{XY}(x,y)>0\quad \forall (x,y)\in\XYspace.$$ 
\end{assumption}

Our approach requires specifying two constants, $\varepsilon,\delta \in (0,1)$, along with a kernel density function $\kpdf : \mathbb{R}^q \to [0, +\infty)$. Concerning the latter, we make the following assumption.

\begin{assumption}\label{assumption:kernel}
    $\kpdf$ is compactly supported in $[-1,1]^q.$ Furthermore, it is infinitely differentiable, with uniformly bounded derivatives up to order $k$.
\end{assumption}

As outlined in Section~\ref{subsec:training}, we also consider a random vector $Z\in\mathbb{R}^q$ with density $\kpdf$, independent of $X$ and $Y$. We denote by $\kpdf_\varepsilon(y)=\varepsilon^{-q}\kpdf_\varepsilon(y/\varepsilon)$ the scaled kernel. Given an activation $\rho:\mathbb{R}\to\mathbb{R}$, we write $\mathscr{A}_\rho(d,q)$ for the space of all CPFN architectures $\varphi:\mathbb{R}^{d}\times\mathbb{R}^{q}\to\mathbb{R}^q$ whose submodules employ $\rho$ as activation function in their hidden layers. We make the following assumption on $\rho$, which ensures a suitable expressivity and smoothness of the neural network models considered.

\begin{assumption}
    \label{assumption:rho}
    $\rho$ is a nonpolynomial $k$-times continuously differentiable function.
\end{assumption}

\noindent Given $M\ge1$, we consider the following hypothesis class.
$$
\mathscr{H}_{M}=\left\{\varphi\in\mathscr{A}_\rho(d,q)\;\text{such that}\;\int_{[0,1]^q}\|\varphi(\cdot,u)\|_{C^{k}(\Xspace;\mathbb{R}^q)}du< M\right\}.$$

For every $n$, let $\{(x_i,y_i)\}_{i=1}^{n}$ be an \textit{iid} random sample from $(X,Y)$, and consider the following empirical loss function 
$$\mathscr{L}_{n}^{\delta,\varepsilon}(\varphi)=-\frac{1}{n}\sum_{i=1}^n\log\left[\delta + \int_{[0,1]^q}\kpdf_\varepsilon(y_i-\varphi(x_i, u))du\right].$$
Let $\hat{\varphi}_{\varepsilon,\delta,n}\in\mathscr{H}_{M}$ be any $(1/n)$-quasi-minimizer of the loss function, that is, an element of the hypothesis class such that
$$\mathscr{L}^{\delta,\varepsilon}_n(\hat{\varphi}_{\varepsilon,\delta, n})\le\inf_{\varphi\in\mathscr{H}_{M}}\;\mathscr{L}_n^{\delta,\varepsilon}(\varphi)+\frac{1}{n}.$$
Let $\hat{f}_{\varepsilon,\delta,n,x}$ be the probability density function of the random variable $\hat{\varphi}_{\varepsilon,\delta,n}(x, U)+\varepsilon Z$, with $U\sim\mathcal{U}([0,1]^q)$ a uniform random variable independent of $X,Y,Z$. The following holds true.

\begin{theorem}[Near asymptotic consistency]
    \label{thm:main}
    Consider the setting described above.
    For $M$ large enough, one has $$\limsup_{n\to+\infty}\;\;\mathbb{E}_X^{1/2}\|f_{Y|X}-\hat{f}_{\varepsilon,\delta,n,X}\|_{L^1(\Yspace)}^2< c_1\sqrt{\delta}+c_2\sqrt{\varepsilon/\delta},$$
    for some positive constants $c_1,c_2>0$ independent of $\varepsilon$ and $\delta.$
\end{theorem}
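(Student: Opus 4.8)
The plan is to decompose the quantity $\mathbb{E}_X^{1/2}\|f_{Y|X}-\hat f_{\varepsilon,\delta,n,X}\|_{L^1(\Yspace)}^2$ into three pieces: an \emph{approximation} (or bias) term measuring how well the mollified/regularized target can be represented by a density of the form $f_{\theta,\varepsilon,x}+\delta$, a \emph{statistical} (generalization) term controlling the gap between the empirical loss $\mathscr{L}_n^{\delta,\varepsilon}$ and its population counterpart $\mathscr{L}^{\delta,\varepsilon}(\varphi)=-\mathbb{E}[\log(\delta+\int\kpdf_\varepsilon(Y-\varphi(X,u))du)]$, and a translation from KL-type excess risk back to the $L^1$ metric. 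The natural bridge is a Pinsker-type inequality: for the (sub)probability densities involved, $\|p-q\|_{L^1}^2 \lesssim D_{\mathrm{KL}}(p\|q)$ up to the $\delta$-normalization defect, so after taking $\mathbb{E}_X$ and applying Jensen/Cauchy–Schwarz one controls $\mathbb{E}_X\|f_{Y|X}-\hat f_{\varepsilon,\delta,n,X}\|_{L^1}^2$ by the excess KL risk of $\hat\varphi_{\varepsilon,\delta,n}$ plus a $O(\delta)$ term coming from the fact that $\delta+\int\kpdf_\varepsilon(\cdot)du$ integrates to $1+\delta|\Yspace_\varepsilon|$ rather than $1$.

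For the \textbf{approximation term} I would first mollify: since $f_{XY}\in C^k(\XYspace)$ with $k>(d+q)/2$ and is bounded below on the compact convex support (Assumption~\ref{assumption:support}), standard results on couplings/transport (the Knothe–Rosenblatt rearrangement, as cited) give a map $\varphi_\star(x,\cdot)$ measurably depending on $x$ that pushes $\mathcal{U}([0,1]^q)$ onto $\mathbb{P}_{Y|X=x}$, and the regularity of $f_{XY}$ (plus the lower bound, which makes the conditional CDFs $C^k$ diffeomorphisms) should make $\varphi_\star$ itself lie in $C^k$ jointly, with $\int\|\varphi_\star(\cdot,u)\|_{C^k}du$ finite — this is what fixes the constant $M$. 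Then $\varphi_\star(x,U)+\varepsilon Z$ has density $f_{XY}(x,\cdot)/f_X(x)$ convolved with $\kpdf_\varepsilon$, which is $\varepsilon$-close to $f_{Y|X}(\cdot|x)$ in $L^1$ at rate $O(\varepsilon)$ by the $C^1$-smoothness of the target and the unit-scale support of $\kpdf$ (Assumption~\ref{assumption:kernel}). Finally, the universal approximation theorem for nonpolynomial $C^k$ activations (Assumption~\ref{assumption:rho}), applied in $C^k(\Xspace;\mathbb{R}^q)$ to approximate $\varphi_\star$, combined with Lipschitz dependence of $y\mapsto\kpdf_\varepsilon(y)$ (with constant $O(\varepsilon^{-q-1})$) and hence of the loss on the network, shows $\inf_{\varphi\in\mathscr{H}_M}\mathscr{L}^{\delta,\varepsilon}(\varphi)$ exceeds the ``oracle'' value by an amount that can be made arbitrarily small; as $n\to\infty$ only the irreducible $\varepsilon$- and $\delta$-bias survives, and one tracks how the KL excess risk against the $\varepsilon$-mollified target feeds the $c_1\sqrt\delta + c_2\sqrt{\varepsilon/\delta}$ bound — the $\sqrt{\varepsilon/\delta}$ shape arising because $\log(\delta+t)$ has derivative $O(1/\delta)$ near $t\approx 0$, so the $O(\varepsilon)$ density error is amplified by $1/\delta$ inside the log-likelihood, then square-rooted by Pinsker.

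For the \textbf{statistical term} I would invoke a uniform law of large numbers over the loss class $\{(x,y)\mapsto -\log(\delta+\int\kpdf_\varepsilon(y-\varphi(x,u))du): \varphi\in\mathscr{H}_M\}$. The key points are: (i) this class is uniformly bounded — below because $\delta>0$ and $\kpdf$ bounded gives $\delta\le \delta+\int\kpdf_\varepsilon \le \delta + \|\kpdf_\varepsilon\|_\infty$, so the log is bounded by $|\log\delta|$ and $\log(\delta+\varepsilon^{-q}\|\kpdf\|_\infty)$; (ii) it is a Lipschitz image of the $C^k$-bounded (hence, by Arzelà–Ascoli on the compact $\Xspace\times[0,1]^q$, totally bounded) set $\mathscr{H}_M$, so it has finite covering numbers / finite pseudo-dimension controlled uniformly in $n$. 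Then $\sup_{\varphi\in\mathscr{H}_M}|\mathscr{L}_n^{\delta,\varepsilon}(\varphi)-\mathscr{L}^{\delta,\varepsilon}(\varphi)|\to 0$ a.s.\ (or in expectation), which together with the $1/n$-quasi-minimization property gives $\limsup_n \mathscr{L}^{\delta,\varepsilon}(\hat\varphi_{\varepsilon,\delta,n}) \le \inf_{\varphi\in\mathscr{H}_M}\mathscr{L}^{\delta,\varepsilon}(\varphi)$ almost surely; combined with the approximation bound this controls the population excess risk, and hence (via Pinsker and $\mathbb{E}_X$, $\limsup$) the stated $L^1$ quantity. One must be a little careful passing the $\limsup$ and the square root through expectations — Fatou plus the uniform boundedness of everything in sight handles it.

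The \textbf{main obstacle} I expect is the approximation/regularity step: showing that the Knothe–Rosenblatt (or some other) conditional transport map $\varphi_\star$ inherits enough smoothness — ideally $\varphi_\star \in C^k$ jointly in $(x,u)$ with a finite $\int\|\varphi_\star(\cdot,u)\|_{C^k(\Xspace)}du$ — from the $C^k$ regularity and strict positivity of $f_{XY}$ on the compact convex $\XYspace$. The 1D conditional-quantile construction is clean (inverse of a $C^k$ strictly increasing CDF is $C^k$), but in dimension $q>1$ the Knothe–Rosenblatt map is built by iterated conditioning, and one has to verify that each successive conditional density remains $C^k$ and bounded below on its (now $x$- and previous-coordinates-dependent) support, so that the composition stays $C^k$ with controlled norm — this is where the convexity of $\XYspace$ and the global lower bound on $f_{XY}$ do the real work, and it is the technically delicate heart of the argument. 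Everything downstream (Pinsker, Lipschitz propagation through $\kpdf_\varepsilon$ and the log, universal approximation, and the uniform LLN over a $C^k$-ball) is comparatively routine, modulo bookkeeping of the $\varepsilon$- and $\delta$-dependent constants to land exactly on $c_1\sqrt\delta + c_2\sqrt{\varepsilon/\delta}$.
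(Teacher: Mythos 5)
Your proposal matches the paper's proof in structure and in all the load-bearing ideas: a Knothe--Rosenblatt oracle $G$ whose joint $C^k$ regularity (with a finite Bochner norm $\int\|G(\cdot,u)\|_{C^k}\,du$) is the delicate technical heart and is what fixes $M$; universal approximation of $G$ by neural networks in a $C^k$ topology; a uniform LLN over the mollified log-loss class (the paper uses bracketing entropy and Donsker bounds from van der Vaart--Wellner where you invoke Arzel\`a--Ascoli/covering numbers, but both routes exploit exactly the same $C^k$-norm bound $\|p\|_{C^k}\lesssim\varepsilon^{-q-k}M$ on the density class); the $O(\varepsilon)$ convolution bias amplified by the $1/\delta$ Lipschitz constant of $t\mapsto\log(\delta+t)$; and a $\delta$-tilted Pinsker inequality absorbing the normalization defect to convert the KL-type excess risk into the $L^1$ bound with the $c_1\sqrt\delta+c_2\sqrt{\varepsilon/\delta}$ shape. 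You also correctly single out the smooth conditional Knothe--Rosenblatt rearrangement as the genuinely nontrivial step, which the paper isolates as Lemma~\ref{lemma:transport}.
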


Theorem~\ref{thm:main} establishes a weak form of consistency: while the estimator $\hat{f}_{\varepsilon,\delta,n,X}$ does not converge exactly to the true conditional density $f_{Y|X}$, the approximation error can be made arbitrarily small by choosing $\varepsilon,\delta\to0$ appropriately. Hence, the result guarantees asymptotic closeness in an averaged $L^1$ sense rather than full consistency. We conclude with a useful Corollary, descending directly from Theorem~\ref{thm:main}, which re-frames the result in terms of Wasserstain distances. To this end, we recall that the 1-Wasserstein distance $\mathcal{W}_1$ between two probability distributions with densities $f$ and $g$ is defined as
\begin{equation}
    \label{eq:wasserstein}
    \mathcal{W}_1(f,g)=
\inf\left\{
\int_{\mathbb{R}^q\times\mathbb{R}^q} |y - y'|\pi(dx,dy), \;\;\pi\in\Pi(f,g) \right\},
\end{equation}
where $\Pi(f,g)$ denotes the collection of all probability distributions over $\mathbb{R}^q\times\mathbb{R}^q$ whose marginals are $f$ and $g$, respectively. Intuitively, the 1-Wasserstein distance can be interpreted as quantifying the minimal transportation cost required to transform the mass distribution described by the density $ f$ into that described by $g$, where the cost of moving a unit mass from $y$ to $y'$ is given by the distance $|y - y'|$.

\begin{corollary}
\label{corollary:wasserstein}
Under the assumptions of Theorem~\ref{thm:main}, one has
$$\limsup_{n\to+\infty}\;\;\mathbb{E}_X\left[\mathcal{W}_1(f_{Y|X},\;\hat{f}_{\varepsilon,\delta,n,X})\right]< c_1'\sqrt{\delta}+c_2'\sqrt{\varepsilon/\delta},$$
for some positive constants $c_1',c_2'>0$ independent of $\varepsilon$ and $\delta.$
\end{corollary}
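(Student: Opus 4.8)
The plan is to obtain Corollary~\ref{corollary:wasserstein} directly from Theorem~\ref{thm:main} by means of the elementary bound comparing the $1$-Wasserstein distance with the $L^1$ (equivalently, total--variation) distance, which is available whenever the two measures live on a common bounded set. Concretely, if $f$ and $g$ are probability densities supported in a ball $B_R\subset\mathbb{R}^q$ of radius $R$, then, writing $\mathcal{W}_1(f,g)$ through Kantorovich--Rubinstein duality and normalizing each $1$-Lipschitz test function so that it vanishes at the centre of $B_R$, one gets $\mathcal{W}_1(f,g)\le R\,\|f-g\|_{L^1(\mathbb{R}^q)}$ (equivalently $\mathcal{W}_1\le\operatorname{diam}(B_R)\cdot\mathrm{TV}$). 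So the proof reduces to two tasks: (i) passing from the $L^1(\Yspace)$ discrepancy of Theorem~\ref{thm:main} to the $L^1(\mathbb{R}^q)$ discrepancy; and (ii) fitting both $f_{Y|X}(\cdot\mid x)$ and $\hat f_{\varepsilon,\delta,n,x}$ inside a single ball whose radius $R$ does not depend on $n$.

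Task (i) is immediate and costs only a factor $2$: since $f_{Y|X}(\cdot\mid x)$ is supported in $\Yspace$ and both densities integrate to one, $\|f_{Y|X}(\cdot\mid x)-\hat f_{\varepsilon,\delta,n,x}\|_{L^1(\mathbb{R}^q)}=\|f_{Y|X}(\cdot\mid x)-\hat f_{\varepsilon,\delta,n,x}\|_{L^1(\Yspace)}+\int_{\mathbb{R}^q\setminus\Yspace}\hat f_{\varepsilon,\delta,n,x}$, and the last integral equals $\int_{\Yspace}\big(f_{Y|X}(\cdot\mid x)-\hat f_{\varepsilon,\delta,n,x}\big)\le\|f_{Y|X}(\cdot\mid x)-\hat f_{\varepsilon,\delta,n,x}\|_{L^1(\Yspace)}$. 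For task (ii), I would note that $\hat f_{\varepsilon,\delta,n,x}$ is the law of $\hat\varphi_{\varepsilon,\delta,n}(x,U)+\varepsilon Z$ with $\operatorname{supp}\kpdf\subset[-1,1]^q$, hence its support lies in $\hat\varphi_{\varepsilon,\delta,n}(\{x\}\times[0,1]^q)+[-\varepsilon,\varepsilon]^q$; one then uses the membership $\hat\varphi_{\varepsilon,\delta,n}\in\mathscr{H}_M$, together with the explicit bounded reference network against which the quasi-minimizer is benchmarked in the proof of Theorem~\ref{thm:main}, to produce a ball $B_R$ containing $\Yspace$ and all these supports, with $R$ depending only on $M,\Xspace,\Yspace,q$ and $\varepsilon\le1$. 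With (i)--(ii) in hand one gets $\mathcal{W}_1\big(f_{Y|X}(\cdot\mid x),\hat f_{\varepsilon,\delta,n,x}\big)\le 2R\,\|f_{Y|X}(\cdot\mid x)-\hat f_{\varepsilon,\delta,n,x}\|_{L^1(\Yspace)}$ for every $x$; taking $\mathbb{E}_X$, applying Cauchy--Schwarz ($\mathbb{E}_X[\cdot]\le(\mathbb{E}_X[\cdot]^2)^{1/2}$), and invoking Theorem~\ref{thm:main} yields $\limsup_n\mathbb{E}_X[\mathcal{W}_1(f_{Y|X},\hat f_{\varepsilon,\delta,n,X})]<2Rc_1\sqrt{\delta}+2Rc_2\sqrt{\varepsilon/\delta}$, i.e.\ the claim with $c_i'=2Rc_i$.

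The hard part is task (ii). Unlike total variation, $\mathcal{W}_1$ is sensitive to a small amount of mass transported over a long distance, so one genuinely needs a bound on the range of $\hat\varphi_{\varepsilon,\delta,n}$ that is uniform in $n$. The constraint built into $\mathscr{H}_M$ controls only $\int_{[0,1]^q}\|\hat\varphi_{\varepsilon,\delta,n}(\cdot,u)\|_{C^k(\Xspace;\mathbb{R}^q)}\,du$, an $L^1$-in-$u$ quantity, which a priori still permits a thin, tall spike in $u$ that pushes a vanishing amount of mass off to an $n$-dependent distance; ruling this out requires either strengthening the hypothesis class to also control $\operatorname*{ess\,sup}_{u}\|\hat\varphi_{\varepsilon,\delta,n}(\cdot,u)\|_{C^k(\Xspace;\mathbb{R}^q)}$ (harmless in practice, e.g.\ through weight clipping) or extracting a uniform tail/moment estimate from the proof of Theorem~\ref{thm:main}. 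If one prefers to sidestep (ii), the same reasoning with the splitting $\hat f_{\varepsilon,\delta,n,x}=\hat f_{\varepsilon,\delta,n,x}\mathbf{1}_{\Yspace}+\hat f_{\varepsilon,\delta,n,x}\mathbf{1}_{\mathbb{R}^q\setminus\Yspace}$ (transport the outer part to a reference point $y_0\in\Yspace$ and then compare within $\Yspace$) gives $\mathcal{W}_1\big(f_{Y|X}(\cdot\mid x),\hat f_{\varepsilon,\delta,n,x}\big)\le C_{\Yspace}\,\|f_{Y|X}(\cdot\mid x)-\hat f_{\varepsilon,\delta,n,x}\|_{L^1(\Yspace)}+\int_{\mathbb{R}^q\setminus\Yspace}|y|\,\hat f_{\varepsilon,\delta,n,x}(y)\,dy$, so that everything reduces to bounding the last (tail) term in $L^1(\mathbb{P}_X)$ — which is precisely where the uniform control of the generative model is needed. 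Everything else is routine.
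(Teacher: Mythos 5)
Your overall strategy coincides with the paper's: the authors prove the corollary in two sentences, invoking the bound $\mathcal{W}_1\le (\text{const})\cdot\|\cdot\|_{L^1}$ on compact domains together with Jensen's inequality (equivalent to your Cauchy--Schwarz step), exactly as in your task (i)/(ii) framework. Where you genuinely add value is in noticing that the ``compact domain'' premise is not free. The paper's Step~10 actually establishes the Pinsker-type control in $L^1(\mathbb{R}^q)$, so your task (i) is already handled in the proof of Theorem~\ref{thm:main}; the real issue is your task (ii). The hypothesis class $\mathscr{H}_M$ only constrains $\int_{[0,1]^q}\|\varphi(\cdot,u)\|_{C^k(\Xspace;\mathbb{R}^q)}\,du$, an $L^1$-in-$u$ quantity, which by Markov gives a uniform first-moment bound $\int|y|\,\hat f_{\varepsilon,\delta,n,x}(y)\,dy\le M+\varepsilon\sqrt q$ but does \emph{not} by itself confine the support of $\hat f_{\varepsilon,\delta,n,x}$ to a fixed ball independent of $n$. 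And a uniform first moment is not enough: one can construct a sequence $\nu_n=(1-1/n)\mathcal U([0,1])+(1/n)\mathcal U([Mn,Mn+1])$ with $\|\nu_n-\mathcal U([0,1])\|_{L^1}\to 0$, first moments bounded by $M+1$, yet $\mathcal W_1(\nu_n,\mathcal U([0,1]))\ge M$ for all $n$. So the passage from $L^1$ control to $\mathcal W_1$ control really does require a support bound (or at least a uniformly controlled second moment, so that the tail $\int_{|y-y_0|>T}|y-y_0|\,\hat f\,dy\lesssim T^{-1}$ can be traded off against $T\|\cdot\|_{L^1}$). Your proposed remedies — replacing the $L^1$-in-$u$ constraint in $\mathscr{H}_M$ with an $L^\infty$-in-$u$ one, or proving a uniform tail estimate for the quasi-minimizer — are both sensible and would close the argument. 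In short: your route is the paper's route, but your scrutiny of the compactness step is more careful than the paper's one-line sketch, and the difficulty you flag is a genuine one that the authors do not explicitly address.
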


When considering compact domains, in fact, the 1-Wasserstein distance is bounded by the $L^1$ distance. This fact, combined with  Jensen's inequality, immediately yields the above.

\begin{remark}[Assumptions discussion] \label{remark:assumptions} We provide here some comments to the assumptions. The convexity requirement in Assumption~\ref{assumption:support} guarantees a well-behaved geometry of the support, ensuring that all conditional densities also have a convex support. This is essential for our construction, as it avoids the insurgence of singularities when computing the marginal quantile functions. Similar considerations apply to the strict separation assumption, $f_{XY}>0$ on $\XYspace.$ The compactness assumption, instead, is equivalent to the boundness of $(X,Y)$. The latter allows us to apply the classical results on universal approximation by deep neural networks, and it is crucial for deriving Corollary~\ref{corollary:wasserstein}. Conversely, the threshold on the regularity exponent, $k>(d+q)/2$, arises from the theory of empirical processes: this condition, in fact, guarantees that the corresponding function class is Glivenko--Cantelli (and in fact Donsker), which ensures uniform convergence of the empirical loss to its population counterpart.

 Assumption~\ref{assumption:kernel}, particularly the uniform boundness condition, ensures stability when taking derivatives under integrals, which is crucial in the approximation and bias analysis of the perturbed generative model. Conversely, the smoothness assumption on $\kappa$ guarantees a suitable regularity of the loss function, which is again essential when studying the limit for $n\to+\infty.$ Finally, Assumption~\ref{assumption:rho} guarantees compatibility between the smoothness of the generative model and the regularity of $f_{XY}$, while the nonpolynomiality condition ensures a suitable expressivity of the hypothesis class: see, e,g., Hornik’s universal approximation theorem \citep{hornik1991approximation}.
\end{remark}

\section{Simulation studies}
\label{sec:simulation} 

We begin our experimental analysis with two simulation studies designed to investigate the capabilities of CPFNs in a controlled setting. This approach allows us to directly assess the quality of the conditional distribution approximation using conditional samples—something typically infeasible in real-world case studies.

For this preliminary analysis, we restrict our comparison to CPFNs, KCDE, and, occasionally, quantile regression (QR). We deliberately avoid a broader benchmark including methods such as normalizing flows or tree-based algorithms, as conducting an unbiased comparison would require implementing and tuning these methods ourselves (a challenging task given that the considered case studies are novel and not covered in the existing literature). Instead, we defer this comparative analysis to Section~\ref{sec:test}, where the use of real-world datasets allows for a fair assessment based on previously published results.

%
%
We consider two simulation studies, encompassing both univariate and multivariate settings: see Sections~\ref{subsec:univariate} and~\ref{subsec:multivariate}, respectively. To evaluate the performance of an estimator $\hat f_{Y|X}$ of the conditional density $f_{Y|X}$, we use the average Wasserstein distance (AWD), defined as  
\begin{equation}\label{eq:AWD_uni}
    \operatorname{AWD}(f_{Y|X}, \hat f_{Y|X}) 
        = \mathbb{E}_X \left[ \mathcal{W}_1\big( f_{Y|X}(\cdot|X), \hat f_{Y|X}(\cdot|X) \big) \right],
\end{equation}
with $\mathcal{W}_1$ the 1-Wasserstein distance, cf. \eqref{eq:wasserstein}.
In our experiments, we estimate the latter via Monte Carlo, essentially comparing true and predicted conditional samples obtained for different values of $X$. For a more detailed explanation on the computation of the AWD, we refer the interested reader to the Appendix~\ref{appendix:awd-details}.


In the addition to the AWD, in the univariate case, we also assess the ability of our approach in recovering individual conditional quantile functions. Precisely, given a quantile level $\tau$, we consider the following average quantile error 
$$
\operatorname{AQE}(\tau) = \mathbb{E}_X \big| q_{Y|X}(\tau \mid X) - \hat q_{Y|X}(\tau \mid X) \big|
$$
where, as before, the expectation is approximated via Monte Carlo sampling ---with samples independent of the training set.   
This allows us to evaluate how well the proposed method reconstructs not only the full conditional distribution but also specific quantile functions. In this setting, beside KCDE, we also benchmark CPFN against classical QR methods. 

In fact, we push this comparison a step further by noting that, if one allows multiple fits of a single QR method ---obtained for varying quantile levels $\tau_j$---, then one can essentially obtain a QR-based global estimate of the whole conditional distribution. This allows us to compare CPFN and QR not only using the AQE but also by means of the AWD metric. In fact it is well-known that in the univariate case one has the identity 
\begin{equation}\label{eq:AWD_univariate}
    \operatorname{AWD}(f_{Y|X}, \hat f_{Y|X}) 
    = \mathbb{E}_X\left[ \int_0^1 
        \big| q_{Y|X}(\tau \mid X) - \hat q_{Y|X}(\tau \mid X) \big| \, \mathrm{d}\tau
    \right].
\end{equation}
Still, for a correct interpretation of the results, we emphasize that ---in principle--- this setup clearly favors QR methods. In fact, it allows the latter to use separate models, each specialized on a given quantile level, whereas CPFN and KCDE rely on a single unified fit. 



\subsection{Univariate test case}
\label{subsec:univariate}
We start by considering an univariate setting where both $Y$ and $X$ are one-dimensional random variables. The covariate follows a uniform distribution, $X \sim \mathcal{U}(0, 1)$, and the response variable $Y$ is defined as a nonlinear and heteroscedastic function of $X$, with additional randomness introduced through external variables, $B\sim\mathrm{Bernoulli}(1/2)$ and $W\sim\mathcal{N}(0,1)$, independent of $X$. Precisely, we consider a model in which 
the conditional distribution of $Y\mid X = x$ is given as
\begin{align}\label{eq:data_process1}
Y = 
\begin{cases}
10x(x - 0.5)(1.5 - x) + 0.3W(1.3 - x), & \text{if } x < 0.5 \text{ or } B = 0, \\
10x(x - 0.5)(0.8 - x) + 0.3W(1.3 - x), & \text{otherwise}.
\end{cases}
\end{align}
This piecewise structure induces conditional distributions of $Y \mid X = x$ that vary both in location and scale, 
exhibiting at times unimodal and bimodal behaviors.
As a result, the data-generating mechanism produces a rich and diverse set of distributional shapes across different regions of the covariate space providing a challenging and informative benchmark for evaluating the ability of various methods to recover the full conditional distribution. Figure~\ref{fig:simulated-data} shows a plot of a \textit{iid} sample of $(X,Y)$ of size $n = 1000$.  

\begin{figure}[h!]
\centering
\includegraphics[width=\textwidth]{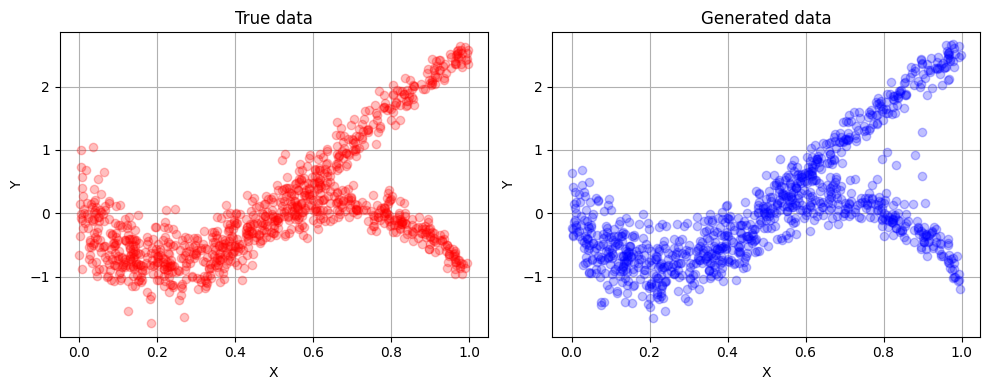}
\caption{\small \label{fig:simulated-data}True vs generated samples of $(X,Y$). Left: true samples, obtained via~\eqref{eq:data_process1}. Right: artificial samples, generated using the trained CPFN. Both samples contain $n = 1000$ observations.}
\end{figure}

In order to provide robust conclusions, we investigate the performances of CPFN and its competitors when fitted on datasets of variable sample sizes, $n=250,500,1000.$ Precisely, for each sample size, we generate the data, fit the models and evaluate the performance metrics multiple times, resulting in 100 independent Monte Carlo replicates for each $n$. This allows us to improve the reliability in our estimates by including Monte Carlo averages and standard deviations for the AWD and AQE.

We implement a CPFN of rank $r=20$ using two identical submodules $\phi$ and $\psi$, each consisting of 3 hidden layers of constant width $L=50$. We use the \emph{gelu} activation, which guarantees consistency with our theoretical assumptions, for all hidden layers, with the addition of the output layer for $\psi$. Finally, we use a gaussian distribution to sample the random features $U\sim\mathcal{N}(0,1)$ and we set $R=30$ during training; we also include $\varepsilon$ in the optimization process, starting with $\varepsilon_0=0.05$ as initial guess. We train for 3000 epochs using the Adam optimizer with default learning rate and set $\delta=10^{-15}$ in the definition of the loss function. Concerning QR methods, we explore a wide range of alternatives, each implemented through dedicated \texttt{R} packages. The classic \textit{linear} quantile regression is carried out using the \texttt{rq} function from the \texttt{quantreg} package~\citep{quantreg}. For a \textit{nearest-neighbour} approach, we employ the \texttt{qeKNN} function from the \texttt{qeML} package~\citep{qeML}. The \textit{spline} quantile smoothing is performed via the \texttt{qsreg} function in the \texttt{fields} package~\citep{fields}, which we preferred over the similar \texttt{rqss} function from the \texttt{quantreg} package due to its superior performance. Finally, we implement quantile regression using \textit{forests} via the \texttt{quantregForest} package~\citep{quantregForest}, offering a robust ensemble-based alternative. Concerning the KCDE method, we rely on the \texttt{R} package \texttt{np} \citep{npPackage}, which implements conditional distribution estimation, from which we derive the quantile function estimates through inversion.
 
Results are in Table~\ref{tab:sim_results} and Figures~\ref{fig:simulated-data}-\ref{fig:conditional-densities-1D}.


\begin{figure}[H]
    \centering
    \includegraphics[width=\linewidth]{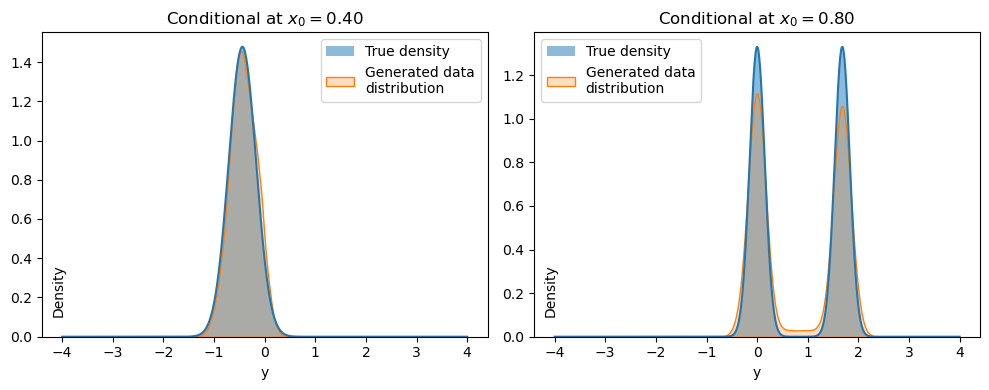}
    \caption{True vs predicted conditional density for two different covariate values $X=x_0.$}
    \label{fig:conditional-densities-1D}
\end{figure}

\;\\

\renewcommand{\arraystretch}{1.5}

\begin{table}[H]
\centering
\adjustbox{max width=\textwidth}{%
\begin{tabular}{|c|c|c|c|c|c|c|c|}
\hline
\textbf{Method} & $\boldsymbol{n}$ & $\textbf{AWD}$ & \textbf{AQE(0.10)} & \textbf{AQE(0.25)} & \textbf{AQE(0.50)} & \textbf{AQE(0.75)} & \textbf{AQE(0.90)}\\ \hline\hline
\multirow{3}{*}{$\begin{array}{cc}\text{CPFN}\\\text{(ours)}\end{array}$} 
  & 250  & 0.080 (0.019) & 0.063 (0.019) &  0.052 (0.015) &  0.285 (0.064) & 0.047 (0.015) & 0.060 (0.019) \\ \cline{2-8}
  & 500  & 0.061 (0.010) & 0.038 (0.013) &  0.038 (0.009) &  0.257 (0.057) &  0.038 (0.010) & 0.046 (0.014) \\ \cline{2-8}
  & 1000 & 0.051	(0.008) & 0.039 (0.009) & 0.030 (0.007) &  0.246 (0.052) &  0.030 (0.008) & 0.036 (0.010) \\ \hline\hline
\multirow{3}{*}{$\begin{array}{cc}\text{np.kde}\\\text{(KCDE)}\end{array}$} 
  & 250  & 0.125 (0.026) & 0.083 (0.020) & 0.076 (0.032) & 0.298 (0.044) & 0.084 (0.036) & 0.095 (0.023) \\ \cline{2-8}
  & 500  & 0.092 (0.019) & 0.063 (0.011) & 0.056 (0.022) & 0.289 (0.041) & 0.054 (0.013) & 0.067 (0.013) \\ \cline{2-8}
  & 1000 & 0.072 (0.013) & 0.048 (0.009) & 0.038 (0.007) & 0.274 (0.043) & 0.039 (0.009) & 0.049 (0.008) \\ \hline\hline
\multirow{3}{*}{linear QR} 
  & 250  & 0.385 (0.016) & 0.346 (0.026) & 0.307 (0.006) & 0.380 (0.012) & 0.380 (0.026) & 0.446 (0.038) \\ \cline{2-8}
  & 500  & 0.381 (0.010) & 0.343 (0.018) & 0.303 (0.004) & 0.375 (0.005) & 0.375 (0.016) & 0.448 (0.026) \\ \cline{2-8}
  & 1000 & 0.381 (0.007) & 0.342 (0.012) & 0.302 (0.002) & 0.373 (0.004) & 0.374 (0.011) & 0.450 (0.018) \\ \hline\hline
\multirow{3}{*}{$\begin{array}{cc}\text{qeML QR}\\\text{(nearest-neighbour)}\end{array}$} 
  & 250  & 0.131 (0.021) & 0.084 (0.013) & 0.087 (0.020) & 0.316 (0.040) & 0.088 (0.025) & 0.087 (0.013) \\ \cline{2-8}
  & 500  & 0.116 (0.016) & 0.072 (0.009) & 0.074 (0.025) & 0.327 (0.036) & 0.071 (0.015) & 0.072 (0.010) \\ \cline{2-8}
  & 1000 & 0.117 (0.012) & 0.069 (0.008) & 0.069 (0.016) & 0.326 (0.046) & 0.067 (0.015) & 0.069 (0.007) \\ \hline\hline
\multirow{3}{*}{$\begin{array}{cc}\text{fields QR}\\\text{(spline)}\end{array}$} 
  & 250  & 0.100 (0.023) & 0.062 (0.016) & 0.059 (0.016) & 0.301 (0.049) & 0.061 (0.017) & 0.064 (0.016) \\ \cline{2-8}
  & 500  & 0.068 (0.017) & 0.043 (0.011) & 0.041 (0.012) & 0.289 (0.048) & 0.041 (0.009) & 0.045 (0.012) \\ \cline{2-8}
  & 1000 & 0.051 (0.013) & 0.033 (0.008) & 0.028 (0.007) & 0.273 (0.049) & 0.028 (0.007) & 0.033 (0.009) \\ \hline\hline
\multirow{3}{*}{$\begin{array}{cc}\text{qrf QR}\\\text{(forests)}\end{array}$} 
  & 250  & 0.292 (0.023) & 0.190 (0.039) & 0.280 (0.062) & 0.446 (0.049) & 0.293 (0.065) & 0.195 (0.050) \\ \cline{2-8}
  & 500  & 0.292 (0.019) & 0.198 (0.039) & 0.285 (0.053) & 0.446 (0.050) & 0.286 (0.054) & 0.192 (0.035) \\ \cline{2-8}
  & 1000 & 0.293 (0.018) & 0.201 (0.033) & 0.283 (0.053) & 0.431 (0.055) & 0.286 (0.054) & 0.193 (0.028) \\ \hline
\end{tabular}
}
\caption{\small \small
Results for the univariate simulation study.
Model performance is measured by the AWD, 
and $\operatorname{AQE}(\tau)$ for $\tau \in \{0.1, 0.25, 0.5, 0.75, 0.9\}$. 
Cell values are averages over $m=100$ Monte Carlo replications, with standard deviations in parentheses.
\label{tab:sim_results}}
\end{table}

As seen in Table~\ref{tab:sim_results}, CPFN achieves the lowest AWD in all scenarios, indicating superior accuracy in recovering the full conditional distribution $f_{Y|X}$. The improvements are particularly notable for smaller samples, where the proposed estimator remains stable and consistently outperforms both kernel and nonparametric quantile regression alternatives.  
Among competing methods, the spline-based quantile regression (\texttt{fields}) and the kernel estimator (\texttt{np.kde}) exhibit the best performance, although both remain less accurate than our approach, especially at the distributional level as measured by the AWD. Linear quantile regression performs markedly worse, as expected, due to its inability to capture the nonlinear and heteroscedastic structure of the data-generating process~\eqref{eq:data_process1}. Nearest-neighbour (\texttt{qeML}) and forest-based (\texttt{qrf}) quantile regressions show moderate accuracy but display a higher variance across replicates.  

The $\operatorname{AQE}$ confirm these observations: our method yields smaller deviations from the true quantile functions for all quantile levels and sample sizes. 
As $n$ increases, all nonparametric estimators improve, yet the gap between our method and the alternatives remains consistent, highlighting its efficiency and robustness in univariate conditional density estimation.

It is interesting to note that all methods, CPFN included, struggle the most in estimating the median quantile. However, this is easily explained by the peculiar behavior of the true conditional distribution. We notice, in fact, that for $x>0.5$ the conditional distribution  $f_{Y\mid X=x}$ is bimodal and symmetric around the center of the two modes, see, e.g., Figure~\ref{fig:conditional-densities-1D}. In particular, as $x\to 1$, the gap between the two modes increases and the support of the true conditional density nearly becomes disconnected ---in principle this never happens, but numerically, it effectively does. Consequently, the position of the median becomes very hard to estimate, as multiple values around the true median yield very similar quantiles.

Relatedly, we notice that while CPFN manages to capture the change between unimodal and bimodal distributions, cf. Fig.~\ref{fig:conditional-densities-1D}, the approximation degrades slightly in the bimodal regime. We argue that this is because, as we discussed previously, there, the support of the conditional density is essentially disconnected, which poses significant challenges. Notice, in fact, that no continuous map can transform a gaussian ---or uniform--- distribution onto a distribution whose support has  multiple connected components, consistently with our theoretical assumptions in Section~\ref{sec:theory}. Since CPFNs are continuous, this scenario pushes their capabilities to the limit. However, we highlight that CPFNs ultimately manage to capture the bimodal behaviour regardless of this fact, by effectively learning to place a small mass in the intermediate region between the two modes, cf. Fig.~\ref{fig:conditional-densities-1D}.

\subsection{Multivariate test case}\label{subsec:multivariate}
We now consider a more challenging multivariate setting with a bivariate response $Y \in \mathbb{R}^2$ and a five-dimensional covariate $X \in \mathbb{R}^5$. In this example, the covariates determine the geometry of the conditional distribution $Y\mid X$, which is given by the superposition of a ``ring" component and two ``blob" components: see Figure~\ref{fig:multivariate}. The joint law of $(X, Y)$ is specified hierarchically, starting with
$$
X \sim \mathcal{N}(0, I_5),
$$
where $I_d$ is the identity matrix of dimension $d$. Given a realization $x \in \mathbb{R}^5$, we define the following latent parameters, which determine the weight and shape of the ring component,
$$
w_{\mathrm{ring}}(x) = s(\beta^\top x),\quad\quad
r_{\mathrm{ring}}(x) = r_0 +r_1\tanh(0.7\,\beta^\top x),\quad\quad
\sigma_{\mathrm{ring}}(x) = \sigma_0 + \sigma_1\,s(1.2\,\gamma^\top x),
$$
where $s$ is the sigmoid function, $r_0=2$, $r_1=1.5$, $\sigma_0=0.18$, $\sigma_1=0.15$, whereas
$$\beta = [1.2,\,-0.8,\,0.6,\,-0.4,\,0.9]^\top\quad\text{and}\quad\gamma = [-0.5,\,1.1,\,-0.3,\,0.7,\,0.4]^\top.$$
The blob components, instead, are parameterized by two means $m_1(x), m_2(x) \in \mathbb{R}^2$ and a shared covariance matrix $\Sigma(x)\in\mathbb{R}^{2\times2}$, each given by smooth transformations of $\gamma^\top x$.

A latent indicator variable then selects between the two components:
$$
Z \mid X=x \sim \mathrm{Bernoulli}(w_{\mathrm{ring}}(x)),
$$
so that $Z=1$ corresponds to the ring component and $Z=0$ to the blobs component. Specifically, when $Z=1$, we sample a radius
$
R \sim \mathcal{N}\bigl(r_{\mathrm{ring}}(x),\,\sigma_{\mathrm{ring}}^2(x)\bigr)$
truncated to $[0,\infty),
$
and an angle $\Theta \sim \mathcal{U}[0,2\pi)$, independently. 
Setting
$$
Y = (R\cos\Theta,\; R\sin\Theta)
$$
yields a ring-shaped conditional distribution with density
$$
f_{\mathrm{ring}}(y \mid x)
= \frac{g_x(\|y\|)}{2\pi\,\|y\|}, \qquad \|y\|>0,
$$
where $g_x$ is the truncated normal density of $R$.

When $Z=0$, instead, the response is drawn from a two-component Gaussian mixture with density
$$
f_{\mathrm{blobs}}(y \mid x)
= \tfrac{1}{2}\,\mathcal{N}\!\bigl(y; m_1(x), \Sigma(x)\bigr)
+ \tfrac{1}{2}\,\mathcal{N}\!\bigl(y; m_2(x), \Sigma(x)\bigr).
$$
Combining the two parts, the overall conditional distribution of $Y \mid X=x$ is
$$
f_{Y \mid X}(y \mid x)
= w_{\mathrm{ring}}(x)\, f_{\mathrm{ring}}(y \mid x)
+ \bigl(1 - w_{\mathrm{ring}}(x)\bigr)\, f_{\mathrm{blobs}}(y \mid x).
$$
As $x$ varies, the conditional shape of $Y \mid X=x$ transitions smoothly between ring-dominant, blob-dominant, and intermediate mixture regimes, yielding a rich and continuously deforming family of conditional densities.


\begin{figure}
    \centering
    \includegraphics[width=0.85\linewidth]{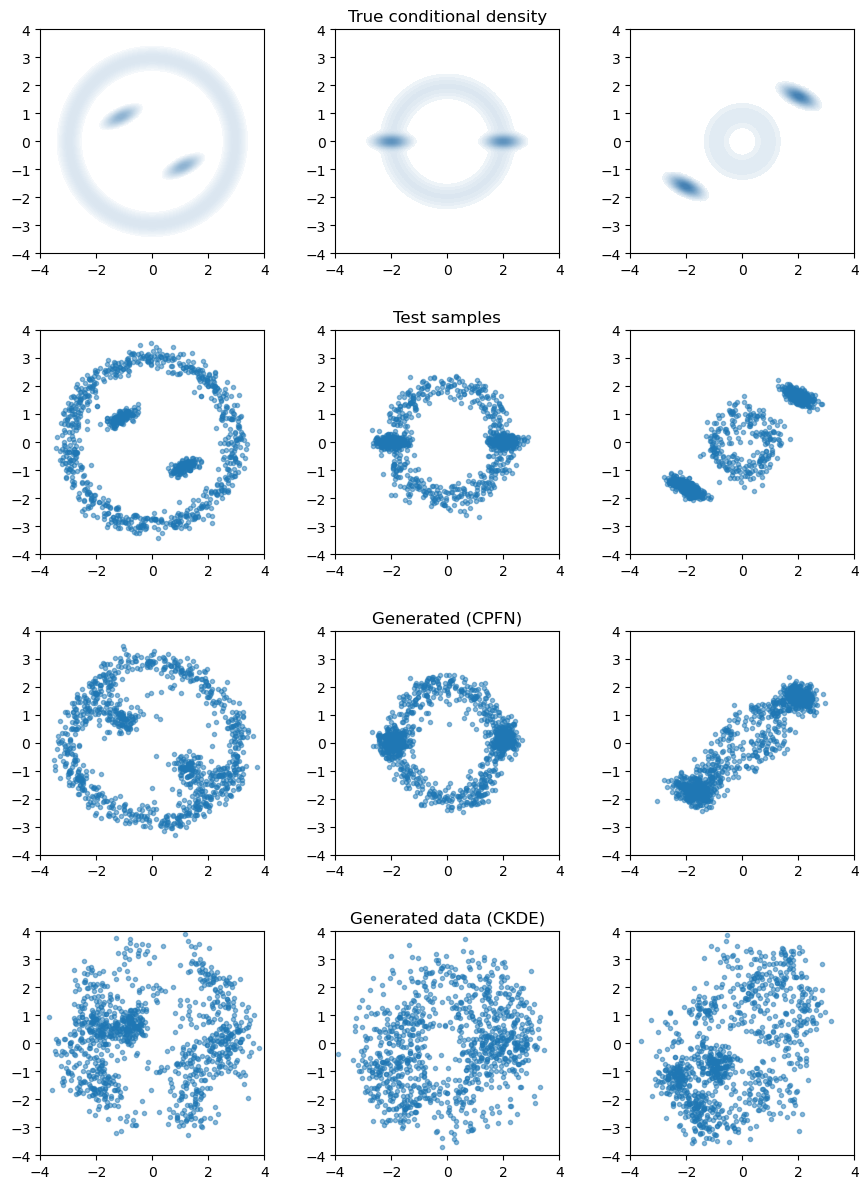}
\caption{\small 
        Comparison of conditional distributions in the multivariate test case for three representative covariate values:
        $x_{\mathrm{ring}}$ (ring-dominant), 
        $x_{\mathrm{trans}}$ (transition), and 
        $x_{\mathrm{blobs}}$ (blob-dominant). 
        Columns correspond to these three settings, while rows (from top to bottom) display: the true conditional density, $1000$ samples from the true conditional distribution, samples generated by the proposed CPFN model, and samples generated by the KCDE. 
        The proposed method accurately reproduces the multimodal ring– and blob–like geometries, while KCDE results appear more diffuse and less faithful to the true distributions.
    }
    \label{fig:multivariate}
\end{figure}

We generate the training data by drawing $n=5000$ \textit{iid} samples of $(X_i,Y_i)$ according to the procedure above, resulting in a dataset with seven columns—five covariate dimensions and two response coordinates. We leverage this data to fit a CPFN model following the procedure described in Section~\ref{sec:method}. Here, we use the same architecture employed for our previous test case, except for an increased rank, $r=50$, and a different size in the input layers. Recall in fact that, here, $d=5$ and $q=2$. In particular, we use $U\sim\mathcal{N}(0,I_2)$ and we set $R=100$ and $\varepsilon_0=[0.05, 0.05]^\top$ during training, cf. Remark~\ref{remark:optimization}. We train for 2000 epochs using Adam with default learning rate and we set $\delta=10^{-15}.$

Parallel to this, we also fit a KCDE model using the \texttt{ks} R package. We then compared the estimates produced by the two approaches using the 
$\operatorname{AWD}$ metric, 
cf. \eqref{eq:AWD_uni}. 
For additional details on this step, we refer the interested reader to
the Appendix~\ref{appendix:exp-details}.\vspace{-0.25cm}
\\\\
%
Results are in Figure~\ref{fig:multivariate} and Table~\ref{tab:multivariate_w1}. We immediately notice that, compared to the univariate test case, this problem is even more challenging. KCDE, in fact, fails to recover the structure of the conditional distributions, often returning very noisy results. This is particularly evident in Figure~\ref{fig:multivariate}, where we reported the conditional samples obtained for three different values of $X$, unseen during training, specifically,\vspace{-0.1cm}
$$
\begin{array}{llrrrrr}
    x_{\mathrm{ring}} & = & [\textcolor{white}{-}2.0, &\textcolor{white}{-}1.0,  &\textcolor{white}{-}0.5,  &-0.3,  &-1.0]^\top\\
    x_{\mathrm{trans}} & = & [\textcolor{white}{-}0.0,  &\textcolor{white}{-}0.0,  &\textcolor{white}{-}0.0, &\textcolor{white}{-}0.0,  &\textcolor{white}{-}0.0]^\top\\
    x_{\mathrm{blobs}} & = & [-2.0, &-1.0,  &-0.5,  &\textcolor{white}{-}0.3,  &\textcolor{white}{-}1.0]^\top
\end{array}
$$
which correspond, respectively, to ring-dominant, transitional, and blob-dominant regimes.

In contrast, CPFN manages to distinguish in between different scenarios, sometimes capturing the conditional distribution with remarkable accuracy, cf. Table~\ref{tab:multivariate_w1}. Still, the method faces some challenges for certain values of $X$, such as $x_{\mathrm{blobs}}$. As with our previous test case, we argue that this occurs because for such values of $X$ the support of the conditional distribution $Y \mid X$ becomes effectively disconnected (at least numerically). In particular, this scenario violates the assumptions in Theorem~\ref{thm:main}, which makes our findings consistent with the theory developed. 
Nonetheless, the samples produced by our method closely reproduce both the global structure and the multimodal geometry of the true conditional distributions, effectively capturing ring– and blob–like shapes as $x$ varies.

These qualitative differences are confirmed quantitatively by the estimated Wasserstein distances, cf. 
Table~\ref{tab:multivariate_w1}. As thereby reported, our method consistently achieves smaller discrepancies compared to the KCDE baseline, both for individual covariates and on average, confirming its superior accuracy in approximating the target conditional densities.

\begin{table}[H]
\centering
\begin{tabular}{l|l|l|l|l}
\hline
\textbf{Method} & \textbf{AWD} & $\mathcal{W}_1\mid X = x_{\mathrm{ring}}$ & $\mathcal{W}_1\mid X = x_{\mathrm{trans}}$ & $\mathcal{W}_1\mid X = x_{\mathrm{blobs}}$ \\ \hline
CPFN & 0.278 & 0.342 & 0.234 & 0.298\\ 
KCDE & 0.684 & 0.649 & 0.701 & 1.081\\ \hline 
\end{tabular}
\caption{\small Comparison between CPFN and KCDE for the multivariate test case. The two methods are compared using the AWD metric and the $\mathcal{W}_1$--distance between the predicted and true conditional distribution, here reported for three representative values of $X$.} 
\label{tab:multivariate_w1}
\end{table}


\section{
Experimental benchmarking on real-world data}\label{sec:test}
Building on our simulation studies, we evaluate the proposed CPFN on real-world datasets to examine its practical applicability and benchmark its performance against other state-of-the-art methods. To this end, we consider the experimental setup discussed in \cite[Section 6]{yang2024conditional}, as to avoid redesigning the experimental pipeline and instead enabling a like-for-like comparison.
%
%
In particular, following \cite{yang2024conditional}, we consider 14 different datasets extracted from the UCI repository \citep{frank2010uci}, of which Table~\ref{tab:datasets} provides a quick overview. 

\renewcommand{\arraystretch}{1.25}

\begin{table}[H]
\centering
\small
\begin{tabular}{ccc}
\begin{tabular}{lrrll}
\hline
\textbf{Dataset} & $\boldsymbol{n}$    & $\boldsymbol{d}$ & $X$ & $Y$\\
\hline
energy & 9568 & 4 & cont. & cont.\\
synchronous & 557 & 4 & cont. & cont.\\
localization & 107 & 5 & mixed & cont.\\
toxicity & 908 & 6 & mixed & cont. \\
concrete & 1030 & 8 & mixed & cont.\\
slump & 103 & 9 & cont. & cont.\\
forestfires & 517 & 12 & mixed & cont.\\
\hline
\end{tabular}
\begin{tabular}{lrr}
\,
\end{tabular}
\begin{tabular}{lrrll}
\hline
\textbf{Dataset} & $\boldsymbol{n}$    & $\boldsymbol{d}$ & $X$ & $Y$\\
\hline
navalpropolsion & 11934 & 16 & cont. & cont.\\
skillcraft & 3338 & 18 & mixed. & disc.\\
sml2010 & 2764 & 20 & cont. & cont.\\
thermography & 1003 & 25 & cont. & cont.\\
support2 & 9105 & 26 & mixed & cont.\\
studentmat & 395 & 43  & disc. & disc.\\
superconductivity & 21263 & 81 & cont. & cont.\\
\hline
\end{tabular}
\end{tabular}
\caption{\small
Datasets used in the experiments, sourced from the UCI repository. Columns are defined as follows: $n$ is the size of the sample, $d$ is the number of covariates (features), $X$ indicates the covariates' type, where ``cont.'' stands for continuous, ``disc.'' for discrete, and ``mixed'' denotes a combination of both ---the same notation is used to denote the type of the response variable, $Y$.}
\label{tab:datasets}
\end{table}

\noindent All case studies involve a univariate response, $q=1$, but have a variable number of covariates, $4\le d\le 81$. Since our approach was designed for continuous responses, we focus only on those datasets in which $Y$ is continuous (12 out of 14). At the same time, however, we place no restriction on the vector of covariates, allowing it to comprise both continuous and discrete features. This enables us to assess the flexibility of CPFN in handling more general scenarios, even beyond its original design.



To ensure a proper comparison, we fit CPFN on these datasets following the same procedure and using the same evaluation metric adopted in \cite{yang2024conditional}. In particular, we consider a five-fold cross-validation routine, where the fitted model is evaluated on the test set using the average negative log-likelihood (NLL), 
$$\operatorname{NLL}=-\frac{1}{N_{\text{test}}}\sum_{i=1}^{N_{\text{test}}}\log \hat{f}_{Y\mid X}(y_i^{\text{test}}\,|\,x_i^{\text{test}}),$$
which is arguably the primary evaluation metric when it comes to the literature on conditional density estimation \citep{yang2024conditional}. Relatedly, notice that here we cannot rely on the AWD metric to measure model performances as, differently from Section~\ref{sec:simulation}, we do not have access to conditional samples.

The benchmark models, taken from \cite{yang2024conditional}, include a variety of competitors such as NKDE and KCDE~\citep{rosenblatt1969conditional,rothfuss2019conditional}, which are based on kernel density estimation with bandwidth and neighborhood size tuned by cross-validation; LSCDE~\citep{sugiyama2010conditional}, which estimates the ratio between joint and marginal densities using Gaussian kernel basis functions; the tree-based CADET~\citep{cousins2019cadet}, which fits a Gaussian distribution on each leaf node; the recently proposed histogram-based CDE-HT~\citep{yang2024conditional}, which learns a tree structure where each leaf models a conditional density by a histogram and optimizes both the tree and histogram parameters jointly under the minimum description length principle; two additional tree-based baselines introduced by us (CART-k and CART-h) that respectively fit a kernel density model and a histogram after a CART regression tree~\citep{BreimanFriedmanOlshenStone1984} is learned from data; as well as neural network approaches NF~\citep{rezende2015variational} and MDN~\citep{pedregosa2011scikit} that apply dropout and noise regularization~\citep{rothfuss2019noise}, and the boosting-based LinCDE~\citep{gao2022lincde}. 

The source code for all benchmark models is available at \url{github.com/ylinicen/CDTree}, whereas our implementation of CPFN can be found at \url{github.com/NicolaRFranco/CPFN}. Here, in order to improve the interpretability of the results, we used the same setup for all of the case studies. Specifically, we always employed three-hidden layer gelu-networks for the two submodules of the CPFN, with terminal activation for $\psi$, constant width $L=50$ and rank $r=50.$ We used a gaussian distribution for sampling, $U\sim\mathcal{N}(0,1)$, and we set $R=100$ and $\varepsilon_0=0.05$ during training. To mitigate overfitting, for each fold in the cross-validation routine, we kept 10\% of the training data for validation. Specifically, we used such validation set to monitor the training dynamics and retain the most promising state of the CPFN across all epochs. All CPFNs were trained for 2000 epochs using the Adam optimizer with default learning rate. As for our previous case studies, we set $\delta=10^{-15}$. When addressing the datasets \emph{forestfires}, \emph{superconductivity} and \emph{support2}, we consider a log-transformation of the response variable, $y\mapsto\log(1+y)$. The NLL is then computed in the original scale using the change of variables formula, as discussed in Section~\ref{subsec:technical}.

Table~\ref{tab:results} presents the results reported in the original work~\cite{yang2024conditional}, along with those of CPFN for the considered test cases. 

\newcommand{\hide}[1]{}
\newcommand{\highlight}[1]{\underline{\textbf{#1}}}

\newcommand{\ourenergy}{2.73}
\newcommand{\oursynchrono}{-4.48}
\newcommand{\ourlocaliza}{4.19}
\newcommand{\ourtoxicity}{2.13}
\newcommand{\ourconcrete}{3.40}
\newcommand{\ourslump}{4.48}
\newcommand{\ourforestfires}{2.53}
\newcommand{\ournaval}{-5.61}
\newcommand{\oursml}{2.05}
\newcommand{\ourthermo}{0.91}
\newcommand{\oursupercond}{3.51}
\newcommand{\oursupport}{1.57}

\renewcommand{\arraystretch}{1.5}

\begin{table}[H]
\centering
\adjustbox{max width=\textwidth}{%
\begin{tabular}{l|ccccccccccc}
\textbf{Dataset} & \multicolumn{1}{c}{CADET} & \multicolumn{1}{c}{CART-h} & \multicolumn{1}{c}{CART-k} & \multicolumn{1}{c}{KCDE} & \multicolumn{1}{c}{LSCDE} & \multicolumn{1}{c}{NKDE} & \multicolumn{1}{c}{CDE-HT} & \multicolumn{1}{c}{LinCDE} & \multicolumn{1}{c}{MDN} & \multicolumn{1}{c}{NF} & \multicolumn{1}{c}{CPFN} \\ \hline
energy            & 3.55 & 3.09 & 3.06 & \highlight{2.47} & 3.38 & 3.00 & 2.93 & 2.93 & 2.78 & 2.86 & \ourenergy \\
synchron.         & -2.93 & -1.63 & -1.86 & -3.59 & -1.25 & -1.57 & -2.11 & -1.85 & -2.94 & -2.64 & \highlight{\oursynchrono} \\
localizat.        & -0.23 & -0.55 & -0.01 & -0.26 & -0.61 & -0.28 & -0.66 & \highlight{-0.95} & -0.68 & -0.43 & \ourlocaliza \\
toxicity          & 1.80 & 1.50 & 1.38 & 1.32 & 1.34 & 1.55 & 1.53 & 1.29 & 1.24 & \highlight{1.23} & \ourtoxicity \\
concrete          & 4.17 & 3.75 & 3.93 & 3.32 & 3.66 & 3.91 & 3.72 & 3.47 & \highlight{2.97} & 3.18 & \ourconcrete \\
slump             & 3.42 & 3.55 & 3.43 & 2.35 & 2.91 & 3.08 & 3.34 & 2.98 & \highlight{2.23} & 2.39 & \ourslump \\
forestfires         & \highlight{1.34} & 3.96 & 4.39 & 4.85 & 4.68 & 5.55 & 3.43 & 4.35 & 3.36 & 3.49 & \ourforestfires \\
navalprop.         & -3.53 & -3.33 & -3.66 & -3.56 & -2.88 & -3.19 & -3.30 & -3.16 & -4.12 & -3.75 & \highlight{\ournaval} \\
skillcraft         & 0.94 & 0.46 & -0.42 & 1.54 & 1.61 & 1.56 & \highlight{-1.02} & 1.26 & 0.35 & 1.11 & --- \\
sml2010           & 6.52 & 2.85 & 2.89 & \highlight{1.61} & 3.14 & 3.12 & 2.70 & 2.97 & 2.15 & 2.51 & \oursml \\
thermogr.         & 2.21 & 0.66 & 0.72 & 0.66 & 0.94 & 0.94 & 0.64 & 0.59 & 0.56 & \highlight{0.52} &  \ourthermo \\
support2          & 9.73 & 0.51 & 0.46 & 0.94 & 0.83 & 1.40 & \highlight{0.29} & 1.48 & 1.53 & 1.24 & \oursupport \\
studentmat         & 3.83 & \highlight{2.65} & 2.66 & 2.90 & 2.93 & 2.83 & 2.83 & 2.93 & 3.54 & 3.53 & --- \\
supercond.         & 9.60 & 3.84 & \highlight{3.46} & 4.45 & 4.17 & 4.19 & 3.48 & 3.87 & 3.94 & 3.85 & \oursupercond \\ \hline
\end{tabular}
}
\caption{\small NLLs for the considered test cases, adapted from \cite{yang2024conditional}. For each dataset, the results of the best performing model are highlighted in bold. Table entries refer to cross-validation averages over five folds.}
\label{tab:results}
\end{table}

\noindent 
Overall, we notice that CPFN proves highly competitive, at times surpassing the performances of all the considered benchmarks. 
It attains the best or near-best results on several datasets, including \textit{synchrono}, \textit{navalprop}, and \textit{supercond}, outperforming classical kernel-based estimators (e.g., KCDE, NKDE, LSCDE), as well as recent tree-based models (e.g., CDE-HT, CADET) but also well-established deep learning methods, such as normalizing flows (NF). On medium-sized datasets such as \textit{energy} and \textit{concrete}, CPFN achieves performance within a narrow margin of the top-performing methods, indicating a favorable balance between model flexibility and sample efficiency.

Still, there are also situations in which CPFN yields unsatisfactory results: see, e.g., the performances attained for \emph{toxicity}, \emph{slump}, and \emph{support2}. We believe, however, that this is to be expected. We notice, in fact, that none of the approaches considered in this analysis performs better than the others on more than two datasets. This goes to support the idea that selecting an appropriate approach requires first developing a solid understanding of the problem at hand, and that model selection is an intrinsically a data and context-dependent task.

We conclude with a short notice on the computational costs. Table~\ref{tab:times} reports the training time per fold in the cross-validation analysis across all datasets, comparing the resources required when training on a standard laptop (CPU) versus a high-performance GPU. We notice that, while training times inevitably increase with the number samples and the complexity of the architecture (whose design is tied to the number covariates), they always remain within an acceptable regime, confirming the feasibility of our approach in common applied settings.

\renewcommand{\arraystretch}{1.25}

\begin{table}[H]
\centering
\small
\begin{tabular}{lrrr}
\hline
\textbf{Dataset} & CPFN dofs & Training time (laptop CPU) & Training time (server GPU)\\
\hline
energy & 15650 & 45m 27s & 1m 24s \\
synchronous & 15650 & 3m 27s & 51s\\
localization & 15700 & 1m 32s & 23s\\
toxicity & 15750 & 4m 41s &  27s\\
concrete & 15850 & 5m 23s & 29s\\
slump & 15900 & 1m 37s & 24s\\
forestfires & 16050 & 3m 17s & 25s\\
navalpropolsion & 16150 & 51m 56s & 1m 24s\\
skillcraft & --- & --- & ---\\
sml2010 & 16300 & 12m 22s & 41s\\
thermography & 16700 & 5m 4s & 21s\\
support2 & 16750 & 40m 45s & 1m 2s\\
studentmat & --- & --- & ---\\
superconductivity & 19500 & 1h 37m 55s & 2m 7s\\
\hline
\end{tabular}
\caption{\small Number of trainable parameters in the CPFN architectures (dofs) and training times for the considered datasets. CPU times were obtained using a standard laptop with an 11th Gen Intel Core i7-1185G7 running at 3.00 GHz, whereas GPU times refer to a dedicated machine equipped with an NVIDIA Tesla V100 GPU and 64 MB of dedicated RAM.}
\label{tab:times}
\end{table}


\section{Conclusions}
\label{sec:conclusions}
We introduced conditional push-forward neural networks (CPFN), a simple yet expressive framework for conditional distribution estimation that shifts the focus from explicitly modelling $f_{Y|X}$ to learning a fast conditional sampler $u\mapsto\varphi(x,u)$. This generative viewpoint provides a practical route to explore the full law of $Y\mid X=x$ and to compute conditional summaries (e.g., means, variances, quantiles) via standard Monte Carlo without resorting to problem–specific quadrature rules.

Architecturally, CPFN is deliberately lightweight: a separation-of-variables design allows $x$- and $u$-representations to be constructed independently and combined through a low-rank interaction, while training reduces to a single-objective loss of negative log-likelihood type. 

On the theoretical side, we established a near-asymptotic consistency result showing that, up to controllable smoothing and offset biases, the conditional density induced by CPFN converges in an averaged $L^1$ sense to the true $f_{Y|X}$ as the sample size grows, with an immediate translation to 1-Wasserstein control. While intentionally modest, these guarantees anchor the method in classical statistical principles and clarify the role of the smoothing parameters.

Simulations show that CPFN accurately recovers complex, heteroscedastic and multimodal conditionals in both univariate and multivariate settings, 
surpassing strong nonparametric baselines, such as KCDE and spline/forest quantile regression methods.
On real data, CPFN emerges as a highly competitive method, delivering results that often match or outperform those achieved by leading state-of-the-art techniques, including tree-based and deep learning based approaches. 

Currently, the primary limitation of CPFNs stems from their convolution-based formulation, which can hinder scalability as the response dimension $q$ becomes large. In addition, their capability to represent conditional distributions with disconnected supports, particularly when relying on simple uniform or gaussian sources of stochasticity, remains limited.

These observations naturally suggest several extensions. First, coupling CPFN with structured architectures would widen applicability to images, sequences, or graphs, particularly when consider more complex covariates ($d\gg 1$). Conversely, the integration of suitable dimensionality reduction techniques may help mitigate the computational burden associated with evaluating the convolution integral in \eqref{eq:cpfn-density}, thereby improving scalability and stability when dealing with high-dimensional responses ($q \gg 1$).

Parallel to this, a second extension of CPFNs could be to consider more general architectures $\varphi:\mathbb{R}^d \times \mathbb{R}^u \to \mathbb{R}^q$ with $u \ge q$, where additional sources of stochasticity are injected in the generative model. In principle, in fact, this enrichment should mitigate the challenges posed by conditional densities with disconnected supports. At the same, however, this would raise the highly nontrivial question of determining $u$ given $X$ and $Y$.

Another promising research direction could be to consider alternative discrepancy measures beyond the Kullback--Leibler divergence, allowing for the treatment of censored / truncated responses, thus extending the applicability of CPFNs to problems in survival analysis.

Finally, we mention that this generative perspective also highlights applications such as probabilistic forecasting \citep{gneiting2014probabilistic}, conformal prediction \citep{fontana2023conformal}, uncertainty quantification \citep{soize2017uncertainty}, robust decision-making \citep{lempert2019robust}, and simulation-based inference \citep{cranmer2020frontier}, interesting research venues that we leave for future works.
\\

\paragraph{Acknowledgments}
NF acknowledges the support of project DREAM (Reduced Order Modeling and Deep Learning for the real-time approximation of PDEs), grant no. FIS00003154, funded by the Italian Science Fund (FIS) and by Ministero dell'Università e della Ricerca (MUR). NF and LT also acknowledge the support of the INdAM group "Gruppo Nazionale per il Calcolo Scientifico" (GNCS), of which they are members.

\bibliographystyle{plainnat}
\bibliography{references}

\begin{thebibliography}{64}
\providecommand{\natexlab}[1]{#1}
\providecommand{\url}[1]{\texttt{#1}}
\expandafter\ifx\csname urlstyle\endcsname\relax
  \providecommand{\doi}[1]{doi: #1}\else
  \providecommand{\doi}{doi: \begingroup \urlstyle{rm}\Url}\fi

\bibitem[Adcock and Dexter(2021)]{adcock2021gap}
Ben Adcock and Nick Dexter.
\newblock The gap between theory and practice in function approximation with deep neural networks.
\newblock \emph{SIAM Journal on Mathematics of Data Science}, 3\penalty0 (2):\penalty0 624--655, 2021.

\bibitem[Bashtannyk and Hyndman(2001)]{bashtannyk2001bandwidth}
David~M Bashtannyk and Rob~J Hyndman.
\newblock Bandwidth selection for kernel conditional density estimation.
\newblock \emph{Computational Statistics \& Data Analysis}, 36\penalty0 (3):\penalty0 279--298, 2001.

\bibitem[Baumann et~al.(2021)Baumann, Hothorn, and R{\"u}gamer]{baumann2021deep}
Philipp~FM Baumann, Torsten Hothorn, and David R{\"u}gamer.
\newblock Deep conditional transformation models.
\newblock In \emph{Joint European Conference on Machine Learning and Knowledge Discovery in Databases}, pages 3--18. Springer, 2021.

\bibitem[Bishop(1994)]{bishop1994}
Christopher~M Bishop.
\newblock Mixture density networks.
\newblock \emph{Technical report}, 1994.

\bibitem[Breiman et~al.(1984)Breiman, Friedman, Olshen, and Stone]{BreimanFriedmanOlshenStone1984}
L.~Breiman, J.~H. Friedman, R.~A. Olshen, and C.~J. Stone.
\newblock \emph{Classification and Regression Trees}.
\newblock Chapman \& Hall/CRC, 1st edition, 1984.
\newblock \doi{10.1201/9781315139470}.
\newblock URL \url{https://doi.org/10.1201/9781315139470}.

\bibitem[B{\"u}lte et~al.(2025)B{\"u}lte, Scholl, and Kutyniok]{bulte2025probabilistic}
Christopher B{\"u}lte, Philipp Scholl, and Gitta Kutyniok.
\newblock Probabilistic neural operators for functional uncertainty quantification.
\newblock \emph{arXiv preprint arXiv:2502.12902}, 2025.

\bibitem[Cousins and Riondato(2019)]{cousins2019cadet}
Cyrus Cousins and Matteo Riondato.
\newblock Cadet: interpretable parametric conditional density estimation with decision trees and forests.
\newblock \emph{Machine Learning}, 108\penalty0 (8):\penalty0 1613--1634, 2019.

\bibitem[Cranmer et~al.(2020)Cranmer, Brehmer, and Louppe]{cranmer2020frontier}
Kyle Cranmer, Johann Brehmer, and Gilles Louppe.
\newblock The frontier of simulation-based inference.
\newblock \emph{Proceedings of the National Academy of Sciences}, 117\penalty0 (48):\penalty0 30055--30062, 2020.

\bibitem[Devroye(2006)]{devroye2006nonuniform}
Luc Devroye.
\newblock Nonuniform random variate generation.
\newblock \emph{Handbooks in operations research and management science}, 13:\penalty0 83--121, 2006.

\bibitem[Dinh et~al.(2015)Dinh, Krueger, and Bengio]{dinh2015nice}
Laurent Dinh, David Krueger, and Yoshua Bengio.
\newblock Nice: Non‐linear independent components estimation.
\newblock In \emph{Proceedings of the International Conference on Learning Representations (ICLR) Workshop Track}, 2015.
\newblock arXiv:1410.8516.

\bibitem[Dinh et~al.(2017)Dinh, Sohl‐Dickstein, and Bengio]{dinh2017density}
Laurent Dinh, Jascha Sohl‐Dickstein, and Samy Bengio.
\newblock Density estimation using real nvp.
\newblock In \emph{Proceedings of the International Conference on Learning Representations (ICLR)}, 2017.
\newblock arXiv:1605.08803.

\bibitem[Evans(2018)]{evans2018measure}
LawrenceCraig Evans.
\newblock \emph{Measure theory and fine properties of functions}.
\newblock Routledge, 2018.

\bibitem[Fan et~al.(1996)Fan, Yao, and Tong]{fan1996}
Jianqing Fan, Qiwei Yao, and Howell Tong.
\newblock Estimation of conditional densities and sensitivity measures in nonlinear dynamical systems.
\newblock \emph{Biometrika}, 83\penalty0 (1):\penalty0 189--206, 1996.

\bibitem[Flamary et~al.(2021)Flamary, Courty, Gramfort, Alaya, Boisbunon, Chambon, Chapel, Corenflos, Fatras, Fournier, et~al.]{flamary2021pot}
R{\'e}mi Flamary, Nicolas Courty, Alexandre Gramfort, Mokhtar~Z Alaya, Aur{\'e}lie Boisbunon, Stanislas Chambon, Laetitia Chapel, Adrien Corenflos, Kilian Fatras, Nemo Fournier, et~al.
\newblock Pot: Python optimal transport.
\newblock \emph{Journal of Machine Learning Research}, 22\penalty0 (78):\penalty0 1--8, 2021.

\bibitem[Fontana et~al.(2023)Fontana, Zeni, and Vantini]{fontana2023conformal}
Matteo Fontana, Gianluca Zeni, and Simone Vantini.
\newblock Conformal prediction: a unified review of theory and new challenges.
\newblock \emph{Bernoulli}, 29\penalty0 (1):\penalty0 1--23, 2023.

\bibitem[Frank(2010)]{frank2010uci}
Andrew Frank.
\newblock Uci machine learning repository.
\newblock \emph{http://archive. ics. uci. edu/ml}, 2010.

\bibitem[Gao and Hastie(2022)]{gao2022lincde}
Zijun Gao and Trevor Hastie.
\newblock Lincde: conditional density estimation via lindsey's method.
\newblock \emph{Journal of machine learning research}, 23\penalty0 (52):\penalty0 1--55, 2022.

\bibitem[Gneiting and Katzfuss(2014)]{gneiting2014probabilistic}
Tilmann Gneiting and Matthias Katzfuss.
\newblock Probabilistic forecasting.
\newblock \emph{Annual Review of Statistics and Its Application}, 1\penalty0 (1):\penalty0 125--151, 2014.

\bibitem[Goodfellow et~al.(2016)Goodfellow, Bengio, Courville, and Bengio]{goodfellow2016deep}
Ian Goodfellow, Yoshua Bengio, Aaron Courville, and Yoshua Bengio.
\newblock \emph{Deep learning}, volume~1.
\newblock MIT press Cambridge, 2016.

\bibitem[Goodfellow et~al.(2014)Goodfellow, Pouget-Abadie, Mirza, Xu, Warde-Farley, Ozair, Courville, and Bengio]{goodfellow2014generative}
Ian~J Goodfellow, Jean Pouget-Abadie, Mehdi Mirza, Bing Xu, David Warde-Farley, Sherjil Ozair, Aaron Courville, and Yoshua Bengio.
\newblock Generative adversarial nets.
\newblock \emph{Advances in neural information processing systems}, 27, 2014.

\bibitem[Greenwell et~al.(2020)Greenwell, Racine, and Li]{npPackage}
Hayden Greenwell, Jeffrey~S. Racine, and Qi~Li.
\newblock \emph{np: Nonparametric Kernel Methods for Mixed Datatypes}.
\newblock CRAN, 2020.
\newblock URL \url{https://CRAN.R-project.org/package=np}.
\newblock R package version 0.60-12.

\bibitem[G{\"u}hring and Raslan(2021)]{guhring2021approximation}
Ingo G{\"u}hring and Mones Raslan.
\newblock Approximation rates for neural networks with encodable weights in smoothness spaces.
\newblock \emph{Neural Networks}, 134:\penalty0 107--130, 2021.

\bibitem[Hall et~al.(2004)Hall, Racine, and Li]{hall2004}
Peter Hall, Jeff Racine, and Qi~Li.
\newblock Cross-validation and the estimation of conditional probability densities.
\newblock \emph{Journal of the American Statistical Association}, 99\penalty0 (468):\penalty0 1015--1026, 2004.

\bibitem[Holmes et~al.(2010)Holmes, Gray, and Isbell~Jr]{holmes2010fast}
Michael~P Holmes, Alexander~G Gray, and Charles~Lee Isbell~Jr.
\newblock Fast kernel conditional density estimation: A dual-tree monte carlo approach.
\newblock \emph{Computational statistics \& data analysis}, 54\penalty0 (7):\penalty0 1707--1718, 2010.

\bibitem[Hornik(1991)]{hornik1991approximation}
Kurt Hornik.
\newblock Approximation capabilities of multilayer feedforward networks.
\newblock \emph{Neural networks}, 4\penalty0 (2):\penalty0 251--257, 1991.

\bibitem[Hothorn et~al.(2014)Hothorn, Kneib, and B{\"u}hlmann]{hothorn2014conditional}
Torsten Hothorn, Thomas Kneib, and Peter B{\"u}hlmann.
\newblock Conditional transformation models.
\newblock \emph{Journal of the Royal Statistical Society Series B: Statistical Methodology}, 76\penalty0 (1):\penalty0 3--27, 2014.

\bibitem[Huang and Boutros(2016)]{huang2016parameter}
Barbara~FF Huang and Paul~C Boutros.
\newblock The parameter sensitivity of random forests.
\newblock \emph{BMC bioinformatics}, 17\penalty0 (1):\penalty0 331, 2016.

\bibitem[Hyndman and Yao(2002)]{hyndman2002}
Rob~J Hyndman and Qiwei Yao.
\newblock Nonparametric estimation and symmetry tests for conditional density functions.
\newblock \emph{Journal of nonparametric statistics}, 14\penalty0 (3):\penalty0 259--278, 2002.

\bibitem[Hyndman et~al.(1996)Hyndman, Bashtannyk, and Grunwald]{hyndman1996}
Rob~J Hyndman, David~M Bashtannyk, and Gary~K Grunwald.
\newblock Estimating and visualizing conditional densities.
\newblock \emph{Journal of Computational and Graphical Statistics}, 5\penalty0 (4):\penalty0 315--336, 1996.

\bibitem[Hyt{\"o}nen et~al.(2016)Hyt{\"o}nen, Van~Neerven, Veraar, and Weis]{hytonen2016analysis}
Tuomas Hyt{\"o}nen, Jan Van~Neerven, Mark Veraar, and Lutz Weis.
\newblock \emph{Analysis in Banach spaces}, volume~12.
\newblock Springer, 2016.

\bibitem[Kanamori et~al.(2012)Kanamori, Suzuki, and Sugiyama]{kanamori2012}
Takafumi Kanamori, Taiji Suzuki, and Masashi Sugiyama.
\newblock Statistical analysis of kernel-based least-squares density-ratio estimation.
\newblock \emph{Machine Learning}, 86\penalty0 (3):\penalty0 335--367, 2012.

\bibitem[Kobyzev et~al.(2020)Kobyzev, Prince, and Brubaker]{kobyzev2020normalizing}
Ivan Kobyzev, Simon~JD Prince, and Marcus~A Brubaker.
\newblock Normalizing flows: An introduction and review of current methods.
\newblock \emph{IEEE transactions on pattern analysis and machine intelligence}, 43\penalty0 (11):\penalty0 3964--3979, 2020.

\bibitem[Koenker(2005)]{koenker2005quantile}
Roger Koenker.
\newblock \emph{Quantile regression}, volume~38.
\newblock Cambridge university press, 2005.

\bibitem[Koenker(2018)]{quantreg}
Roger Koenker.
\newblock \emph{quantreg: Quantile Regression}.
\newblock CRAN, 2018.
\newblock URL \url{https://CRAN.R-project.org/package=quantreg}.
\newblock R package version 5.36.

\bibitem[Koenker et~al.(1994)Koenker, Ng, and Portnoy]{koenker1994quantile}
Roger Koenker, Pin Ng, and Stephen Portnoy.
\newblock Quantile smoothing splines.
\newblock \emph{Biometrika}, 81\penalty0 (4):\penalty0 673--680, 1994.

\bibitem[Kone{\v{c}}n{\'a} and Horov{\'a}(2019)]{konevcna2019maximum}
Kate{\v{r}}ina Kone{\v{c}}n{\'a} and Ivanka Horov{\'a}.
\newblock Maximum likelihood method for bandwidth selection in kernel conditional density estimate.
\newblock \emph{Computational Statistics}, 34\penalty0 (4):\penalty0 1871--1887, 2019.

\bibitem[Kumar et~al.(2024)Kumar, Yang, and Lin]{kumar2024likelihood}
Shivam Kumar, Yun Yang, and Lizhen Lin.
\newblock A likelihood based approach to distribution regression using conditional deep generative models.
\newblock \emph{arXiv preprint arXiv:2410.02025}, 2024.

\bibitem[Lempert(2019)]{lempert2019robust}
Robert~J Lempert.
\newblock Robust decision making (rdm).
\newblock In \emph{Decision making under deep uncertainty: From theory to practice}, pages 23--51. Springer International Publishing Cham, 2019.

\bibitem[Matloff(2023)]{qeML}
Norm Matloff.
\newblock \emph{qeML: Quick and Easy Machine Learning Tools}, 2023.
\newblock URL \url{https://cran.r-project.org/package=qeML}.
\newblock R package version 1.1, published 2023-11-09.

\bibitem[Meinshausen(2017)]{quantregForest}
Nicolai Meinshausen.
\newblock \emph{quantregForest: Quantile Regression Forests}.
\newblock CRAN, 2017.
\newblock URL \url{https://CRAN.R-project.org/package=quantregForest}.
\newblock R package version 1.3-7.

\bibitem[Meinshausen and Ridgeway(2006)]{meinshausen2006quantile}
Nicolai Meinshausen and Greg Ridgeway.
\newblock Quantile regression forests.
\newblock \emph{Journal of machine learning research}, 7\penalty0 (6), 2006.

\bibitem[Nadaraya(1964)]{nadaraya1964estimating}
Elizbar~A Nadaraya.
\newblock On estimating regression.
\newblock \emph{Theory of Probability \& Its Applications}, 9\penalty0 (1):\penalty0 141--142, 1964.

\bibitem[Nychka et~al.(2025)Nychka, Furrer, Paige, Sain, Gerber, Iverson, and Johnson]{fields}
Douglas Nychka, Reinhard Furrer, John Paige, Stephan Sain, Florian Gerber, Matthew Iverson, and Rider Johnson.
\newblock \emph{fields: Tools for Spatial Data for Curve, Surface and Function Fitting}, 2025.
\newblock URL \url{https://cran.r-project.org/package=fields}.
\newblock R package version 16.3.1, published 2025-03-08.

\bibitem[Pacchiardi et~al.(2024)Pacchiardi, Adewoyin, Dueben, and Dutta]{pacchiardi2024probabilistic}
Lorenzo Pacchiardi, Rilwan~A Adewoyin, Peter Dueben, and Ritabrata Dutta.
\newblock Probabilistic forecasting with generative networks via scoring rule minimization.
\newblock \emph{Journal of Machine Learning Research}, 25\penalty0 (45):\penalty0 1--64, 2024.

\bibitem[Pedregosa et~al.(2011)Pedregosa, Varoquaux, Gramfort, Michel, Thirion, Grisel, Blondel, Prettenhofer, Weiss, Dubourg, et~al.]{pedregosa2011scikit}
Fabian Pedregosa, Ga{\"e}l Varoquaux, Alexandre Gramfort, Vincent Michel, Bertrand Thirion, Olivier Grisel, Mathieu Blondel, Peter Prettenhofer, Ron Weiss, Vincent Dubourg, et~al.
\newblock Scikit-learn: Machine learning in python.
\newblock \emph{the Journal of machine Learning research}, 12:\penalty0 2825--2830, 2011.

\bibitem[Pinkus(1999)]{pinkus1999approximation}
Allan Pinkus.
\newblock Approximation theory of the mlp model in neural networks.
\newblock \emph{Acta numerica}, 8:\penalty0 143--195, 1999.

\bibitem[Pospisil and Lee(2018)]{pospisil2018rfcde}
Taylor Pospisil and Ann~B Lee.
\newblock Rfcde: Random forests for conditional density estimation.
\newblock \emph{arXiv preprint arXiv:1804.05753}, 2018.

\bibitem[Rezende and Mohamed(2015)]{rezende2015variational}
Danilo Rezende and Shakir Mohamed.
\newblock Variational inference with normalizing flows.
\newblock In \emph{International conference on machine learning}, pages 1530--1538. PMLR, 2015.

\bibitem[Rosenblatt(1969)]{rosenblatt1969conditional}
Murray Rosenblatt.
\newblock Conditional probability density and regression estimators.
\newblock \emph{Multivariate analysis II}, 25:\penalty0 31, 1969.

\bibitem[Rothfuss et~al.(2019{\natexlab{a}})Rothfuss, Ferreira, Boehm, Walther, Ulrich, Asfour, and Krause]{rothfuss2019noise}
Jonas Rothfuss, Fabio Ferreira, Simon Boehm, Simon Walther, Maxim Ulrich, Tamim Asfour, and Andreas Krause.
\newblock Noise regularization for conditional density estimation.
\newblock \emph{arXiv preprint arXiv:1907.08982}, 2019{\natexlab{a}}.

\bibitem[Rothfuss et~al.(2019{\natexlab{b}})Rothfuss, Ferreira, Walther, and Ulrich]{rothfuss2019conditional}
Jonas Rothfuss, Fabio Ferreira, Simon Walther, and Maxim Ulrich.
\newblock Conditional density estimation with neural networks: Best practices and benchmarks.
\newblock \emph{arXiv preprint arXiv:1903.00954}, 2019{\natexlab{b}}.

\bibitem[Serfling(2002)]{serfling2002quantile}
Robert Serfling.
\newblock Quantile functions for multivariate analysis: approaches and applications.
\newblock \emph{Statistica Neerlandica}, 56\penalty0 (2):\penalty0 214--232, 2002.
\newblock \doi{https://doi.org/10.1111/1467-9574.00195}.
\newblock URL \url{https://onlinelibrary.wiley.com/doi/abs/10.1111/1467-9574.00195}.

\bibitem[Shen and Hsu(2020)]{shen2020conditional}
Pao-sheng Shen and Huichen Hsu.
\newblock Conditional maximum likelihood estimation for semiparametric transformation models with doubly truncated data.
\newblock \emph{Computational Statistics \& Data Analysis}, 144:\penalty0 106862, 2020.

\bibitem[Sohn et~al.(2015)Sohn, Lee, and Yan]{sohn2015learning}
Kihyuk Sohn, Honglak Lee, and Xinchen Yan.
\newblock Learning structured output representation using deep conditional generative models.
\newblock \emph{Advances in neural information processing systems}, 28, 2015.

\bibitem[Soize(2017)]{soize2017uncertainty}
Christian Soize.
\newblock \emph{Uncertainty quantification}, volume~23.
\newblock Springer, 2017.

\bibitem[Song et~al.(2025)Song, Wang, Shen, Lin, and Huang]{song2025wasserstein}
Shanshan Song, Tong Wang, Guohao Shen, Yuanyuan Lin, and Jian Huang.
\newblock Wasserstein generative regression.
\newblock \emph{Journal of the Royal Statistical Society Series B: Statistical Methodology}, page qkaf053, 2025.

\bibitem[Sugiyama et~al.(2010)Sugiyama, Takeuchi, Suzuki, Kanamori, Hachiya, and Okanohara]{sugiyama2010conditional}
Masashi Sugiyama, Ichiro Takeuchi, Taiji Suzuki, Takafumi Kanamori, Hirotaka Hachiya, and Daisuke Okanohara.
\newblock Conditional density estimation via least-squares density ratio estimation.
\newblock In \emph{Proceedings of the Thirteenth International Conference on Artificial Intelligence and Statistics}, pages 781--788. JMLR Workshop and Conference Proceedings, 2010.

\bibitem[Villani et~al.(2008)]{villani2008optimal}
C{\'e}dric Villani et~al.
\newblock \emph{Optimal transport: old and new}, volume 338.
\newblock Springer, 2008.

\bibitem[Wang et~al.(2024)Wang, Shin, and Bai]{wang2024generative}
Shijie Wang, Minsuk Shin, and Ray Bai.
\newblock Generative quantile regression with variability penalty.
\newblock \emph{Journal of Computational and Graphical Statistics}, 33\penalty0 (4):\penalty0 1202--1213, 2024.

\bibitem[Watson(1964)]{watson1964smooth}
Geoffrey~S Watson.
\newblock Smooth regression analysis.
\newblock \emph{Sankhy{\=a}: The Indian Journal of Statistics, Series A}, pages 359--372, 1964.

\bibitem[Wellner et~al.(2013)]{wellner2013weak}
Jon Wellner et~al.
\newblock \emph{Weak convergence and empirical processes: with applications to statistics}.
\newblock Springer Science \& Business Media, 2013.

\bibitem[Winkler et~al.(2019)Winkler, Worrall, Hoogeboom, and Welling]{winkler2019learning}
Christina Winkler, Daniel Worrall, Emiel Hoogeboom, and Max Welling.
\newblock Learning likelihoods with conditional normalizing flows.
\newblock \emph{arXiv preprint arXiv:1912.00042}, 2019.

\bibitem[Yang and van Leeuwen(2024)]{yang2024conditional}
Lincen Yang and Matthijs van Leeuwen.
\newblock Conditional density estimation with histogram trees.
\newblock \emph{Advances in Neural Information Processing Systems}, 37:\penalty0 117315--117339, 2024.

\bibitem[Zhao and Tabak(2025)]{zhao2025adaptive}
Wenjun Zhao and Esteban~G Tabak.
\newblock Adaptive kernel conditional density estimation.
\newblock \emph{Information and Inference: A Journal of the IMA}, 14\penalty0 (1):\penalty0 iaae037, 2025.

\end{thebibliography}
\appendix


\section{Proof of Theorem~\ref{thm:main}}
\label{subsec:proof}
We subdivide the proof into multiple steps. Before starting, however, we premit some notation concerning Bochner spaces. Precisely, given a separable Banach space $(\mathscr{V},\|\cdot\|_{\mathscr{V}})$, we write $L^1([0,1]^q;\;\mathscr{V})$ for the space of all Borel measurable maps $F:[0,1]^q\to\mathscr{V}$ with finite Bochner norm,
i.e. $\|F\|_{L^1([0,1]^q;\;\mathscr{V})}=\int_{[0,1]^q}\|F(u)\|_{\mathscr{V}}du<+\infty.$ For all such maps, we note by $\int_{[0,1]^q}F(u)du$ the unique element $\overline{F}\in\mathscr{V}$ satisfying
$$\ell(\overline{F})=\int_{[0,1]^q}\ell(F(u))du\quad\quad\forall\ell\in\mathscr{V}^*.$$
We recall that if $T:\mathscr{V}\to\mathscr{W}$ is a bounded linear operator between two separable Banach spaces, then, given any $F\in L^1([0,1]^q;\;\mathscr{V})$, one has $TF\in L^1([0,1]^q;\;\mathscr{W})$ and
$$\int_{[0,1]^q}TF(u)du=T\left(\int_{[0,1]^q}F(u)du\right),$$
see, e.g., \cite[Theorem 1.2.4]{hytonen2016analysis}.
\\\\
Coming back to the proof, let $d,q,X,Y,Z,U,k,\kpdf,\varepsilon,\delta,\rho,C,M,n, \XYspace, \Xspace$ be as in Section~\ref{subsec:thm}.

With little abuse of notation, given any $F\in L^1([0,1]^q;\mathcal{C}^{k}(\Xspace;\mathbb{R}^k))$, we shall write $F(x,u)$ to intend $F(u)(x).$

\paragraph{Step 1.} \textit{Define the following subset of $L^1([0,1]^q;\mathcal{C}^{k}(\Xspace;\mathbb{R}^k)),$
$$
\mathscr{G}_{M}=\left\{F\in L^1([0,1]^q;\mathcal{C}^{k}(\Xspace;\mathbb{R}^k))\;\text{such that}\;\int_{[0,1]^q}\|\varphi(\cdot,u)\|_{C^{k}(\Xspace;\mathbb{R}^q)}du< M\right\}.$$
For $M$ large enough, there exists $G\in\mathscr{G}_{M}$ such that $G(x,U)\sim Y\,|\,X=x$ for all $x\in\Xspace$}.


\begin{proof}
Thanks to the regularity assumptions on $\kpdf$, $f_X$ and $f_{XY}$, we notice that there exists $G\in L^{1}([0,1]^q;\;C^k(\Xspace;\;\mathbb{R}^q))$ such that, if $U\sim\mathcal{U}([0,1]^q)$ is a uniform random variable independent of $X,Y,Z$, then $G(x,U)\sim Y\,|\,X = x$
for all $x\in\Xspace.$ Such $G$ can be constructed using, e.g., Knothe-Rosenblatt transport \cite{villani2008optimal}, a generalization of the inverse transform method to random vectors, see Lemma~\ref{lemma:transport}. Therefore, setting $M>\|G\|_{L^{1}([0,1]^q;\;C^k(\Xspace;\;\mathbb{R}^q))}$ suffices.\end{proof}

\paragraph{Step 2} \textit{Up to a canonical embedding, $\mathscr{H}_{M}$ is a dense subset of $\mathscr{G}_{M}$.}

\begin{proof}
To start, we recall that $C^k([0,1]^q;\;C^k(\Xspace;\;\mathbb{R}^q))$ is dense in $L^1([0,1]^q;\;C^k(\Xspace;\;\mathbb{R}^q))$: see, for instance, the stronger result discussed in \cite[Proposition 2.4.23]{hytonen2016analysis}. Now, given any $F\in C^k([0,1]^q;\;C^k(\Xspace;\;\mathbb{R}^q))$, we notice that the map $(x,u)\mapsto F(x,u)$ is in $C^{k}(\Xspace\times [0,1]^q;\;\mathbb{R}^q).$ Consequently, the embedding 
$$C^{k}(\Xspace\times [0,1]^q;\;\mathbb{R}^q)\hookrightarrow L^1([0,1]^q;\;C^k(\Xspace;\;\mathbb{R}^q))$$
is guaranteed to produce a dense subset of $L^1([0,1]^q;\;C^k(\Xspace;\;\mathbb{R}^q)).$ At the same time, under Assumption~\ref{assumption:rho}, by classical results on universal approximation ---see, e.g., \cite[Theorem 4.1]{pinkus1999approximation}, $\mathscr{A}_\rho(d,q)$ is dense in $C^{k}(\Xspace;\;\mathbb{R}^q)\otimes C^k([0,1]^q;\;\mathbb{R}^q)$ under its canonical norm, which in turn, by compactness and convexity of both $\Xspace$ and $[0,1]^q$, is dense in $C^{k}(\Xspace\times [0,1]^q;\;\mathbb{R}^q)$. It follows that, up to embedding, $\mathscr{A}_\rho(d,q)$ is dense in $L^1([0,1]^q;\;C^k(\Xspace;\;\mathbb{R}^q)).$ Finally, notice that with little abuse of notation one has
$$\mathscr{H}_{M} = \mathscr{G}_{M}\cap \mathscr{A}_\rho(d,q).$$
Since $\mathscr{G}_{M}$ is open and nonempty, the conclusion follows.
\end{proof}

\paragraph{Step 3} \textit{Consider the family of functions
$$\mathscr{P}_\varepsilon=\left\{
(x,y)\mapsto \int_{[0,1]^q}\kpdf_\varepsilon(y-F(x,u))du\;\;:\;\;F\in\mathscr{G}_{M}
\right\}.$$
We have $\mathscr{P}_\varepsilon\subset C^k(\XYspace)$. Additionally, there exists some $c'=c'(q,k,\kpdf)>0$ such that $\|p\|_{C^k(\XYspace)}\le c'\varepsilon^{-q-k}M$ for all $ p\in\mathscr{P}_\varepsilon$.}

\begin{proof}
Notice that $\kpdf_\varepsilon\in C^k(\mathbb{R}^q)$ with $\|\kpdf_\varepsilon\|_{C^k(\mathbb{R}^q)}\le c\varepsilon^{-q-k}$ for some constant $c>0$ independent of $\varepsilon$. Then, given any $F\in\mathscr{G}_{M}$, by the chain rule and by linearity in $y$,
$$\int_{[0,1]^q}\|\kpdf_\varepsilon(\cdot-F(\cdot,u)\|_{C^k(\XYspace)}du\le c'\varepsilon^{-q-k}\int_{[0,1]^q}\|F(u)\|_{C^{k}(\Xspace;\;\mathbb{R}^q)}du<+\infty,$$
where $c'=c'(q,k,\kpdf).$
That is, the map $u\mapsto \kpdf_\varepsilon(y-F(x,u))$, the output being a function of $x$ and $y$, is an element of the Bochner space $L^1([0,1]^q;\;C^k(\XYspace))$. It follows that integrating over $u$ yields an element of $C^k(\XYspace).$ Additionally, the latter computation shows that
$$\|p\|_{C^k(\XYspace)}\le c'\varepsilon^{-q-k}M\quad\quad\forall p\in\mathscr{P}_\varepsilon.$$
\end{proof}

\paragraph{Step 4} \textit{The family
$$\mathscr{F}_{\delta,\varepsilon}=\{(x,y)\mapsto \log(\delta + p(x,y))\;\;:\;\;p\in\mathscr{P}_\varepsilon\},$$
satisfies the following bound on the bracketing entropy,
\begin{align}\label{eq:entropy_bound}
\log N_{[\;]}(\Delta,\mathscr{F}_{\delta,\varepsilon},L_2(\mathbb{Q}))\le C'\varepsilon^{-q-k}(\delta\Delta)^{-(d+q)/k},
\end{align}
for all probability distributions $\mathbb{Q}$ over $\XYspace$. Here, $C'=C'(d,q,k,\kpdf)>0$ is some positive constant independent of $\varepsilon$ and $\delta$.}

\begin{proof}
Since $\mathscr{P}_\varepsilon\subseteq\{p\in C^k(\XYspace)\;:\;\|p\|_{C^k(\XYspace)}\le c'\varepsilon^{-q-k}M\}$, 
by Theorem 2.7.1 in \cite{wellner2013weak}, we have the entropy bound
\begin{align}
\log N_{[\;]}(\Delta,\mathscr{P}_{\varepsilon},\|\cdot\|_{\infty})\le C'\varepsilon^{-q-k}\Delta^{-(d+q)/k},
\end{align}
for some $C'=C'(d,q,k,\kpdf,M)>0.$ We may assume $C'$ to be independent of $\varepsilon$ as the latter dependency may at most come from the size of the domain $|\XYspace|.$

We now notice that, given any probability distribution $\mathbb{Q}$, for every $p,p'\in\mathscr{P}_\varepsilon$ one has
$$\|\log(\delta+p)-\log(\delta+q)\|_{L_2(\mathbb{Q})}\le \frac{1}{\delta}\|p-q\|_{\infty}.$$
Indeed, for all nonnegative numbers $a,b\ge0$, by Lagrange's Theorem, one has
$$|\log(\delta+a)-\log(\delta+b)|\le\max_{c\ge\delta}\frac{1}{c}\cdot|b-a|=\frac{1}{\delta}|b-a|.$$
Thus, $(\mathscr{P}_\varepsilon,\|\cdot\|_{\infty})$ maps onto $(\mathscr{F}_{\delta,\varepsilon},L_2(\mathbb{Q}))$ with a Lipschitz constant of $\delta^{-1}$. The conclusion now follows by classical properties of entropy numbers, cf. \cite[Theorem 2.10.8]{wellner2013weak}.

\end{proof}

\paragraph{Step 5} \textit{Let $\mathbb{P}$ be the probability law of $(X,Y).$
The family
$\mathscr{F}_{\delta,\varepsilon}$ is a $\mathbb{P}$-Donsker class.}
\begin{proof} 
Under assumption $(iii)$, this is a direct consequence of the entropy bound in Eq. \eqref{eq:entropy_bound} combined with the Donsker Theorem \cite[Theorem 2.5.6]{wellner2013weak}.
\end{proof}

\paragraph{Step 6} \textit{For all $F\in\mathscr{G}_{M}$, define
$$\mathscr{L}^{\delta,\varepsilon}(F)=-\mathbb{E}\left[\log\left(\delta+\int_{[0,1]^q}\kpdf_\varepsilon(Y-F(X,u))du\right)\right].$$
Then, $\sup_{F\in \mathscr{G}_{M}}\left|\mathscr{L}^{\delta,\varepsilon}(F)-\mathscr{L}^{\delta,\varepsilon}_n(F)\right|\to0$ as $n\to+\infty.$}
\begin{proof}
    Let $P$ denote the expectation operator with respect to $(X,Y)$, i.e. $P\psi=\mathbb{E}[\psi(X,Y)]$ for all $\psi\in C(\XYspace).$ Similarly, define the empirical expectation operator $P_n$ as $P_n(\psi)=\frac{1}{n}\sum_{i=1}^{n}\psi(x_i,y_i).$ Since
    $$\sup_{F\in \mathscr{G}_{M}}\left|\mathscr{L}^{\delta,\varepsilon}(F)-\mathscr{L}^{\delta,\varepsilon}_n(F)\right|=\sup_{\psi\in\mathscr{F}_{\delta,\varepsilon}}|P\psi-P_n\psi|,$$
    the conclusion follows directly from Step 5.
\end{proof}

\paragraph{Step 7} \textit{$\mathbb{P}_\varepsilon$-almost surely, we have $\limsup_{n\to+\infty}\mathscr{L}^{\delta,\varepsilon}(\hat{\varphi}_{\varepsilon,\delta,n})\le \mathscr{L}^{\delta,\varepsilon}(G)$.
}
\begin{proof}
First of all, notice that due density of $\mathscr{H}_{M}$ in $\mathscr{G}_{M}$ and continuity of $\mathscr{L}_n^{\delta,\varepsilon}$, we have $$\mathscr{L}_n^{\delta,\varepsilon}(\hat{\varphi}_{\varepsilon,\delta,n})\le \inf_{F\in\mathscr{G}_{M}}\mathscr{L}_n^{\delta,\varepsilon}(F)+1/n\le \mathscr{L}_n^{\delta,\varepsilon}(G)+1/n.$$
Consequently,
    \begin{align*}
        \mathscr{L}^{\delta,\varepsilon}(\hat{\varphi}_{\varepsilon,\delta,n})&\;\le\; \mathscr{L}_n^{\delta,\varepsilon}(\hat{\varphi}_{\varepsilon,\delta,n})+|\mathscr{L}^{\delta,\varepsilon}(\hat{\varphi}_{\varepsilon,\delta,n})-\mathscr{L}_n^{\delta,\varepsilon}(\hat{\varphi}_{\varepsilon,\delta,n})|\;\\
        &\;\le\;
        \mathscr{L}_n^{\delta,\varepsilon}(G)+1/n+\sup_{F\in \mathscr{G}_{M}}\left|\mathscr{L}^{\delta,\varepsilon}(F)-\mathscr{L}^{\delta,\varepsilon}_n(F)\right|.
    \end{align*}
Then, taking the limit as $n\to+\infty$ gives the desired inequality.
\end{proof}

\paragraph{Step 8} \textit{Let $\tilde{Y}=Y+\varepsilon Z$. There exists some $\tilde{c}>0$, independent of $\varepsilon$ and $\delta$, such that for every $(x,y)\in\XYspace$ one has
$$\int_{\mathbb{R}^q}f_{Y|X}(y\;|\;x)|f_{Y|X}(y\;|\;x)-f_{\tilde{Y}|X}(y\;|\;x)|\le \tilde{c}\varepsilon.$$
}
\begin{proof}
    Notice that, by independence of $Y$ and $Z$, for every $x\in \Xspace$, the perturbed density $f_{\tilde{Y}|X}(\cdot\;|\;x)$ is the convolution of $f_{Y|X}(\cdot\;|\;x)$ with $\kappa$. Then, the above is just Lemma~\ref{lemma:convolution} applied to $g=f_{Y|X}(\cdot\;|\;x)$. Uniformity in $x$ comes from the fact that, since
    $$\min_{x\in \Xspace}f_{X}(x)=\min_{x\in \Xspace}\int_{\Yspace}f_{XY}(x,y)dy>\eta |\Yspace|>0$$
    for some $\eta>0$ (cf. Assumption 1), then $\|f_{Y|X}(\cdot\;|\;x)\|_{\infty}$ and $\text{Lip}(f_{Y|X}(\cdot\;|\;x))$ are uniformly bounded in $x$. Similarly, the good behavior at the boundary of $\XYspace$ transfers to that of the conditional supports, $\Yspace_x=\overline{\{y\in\Yspace\;:\;f_{XY}(x,y)>0\}}$, uniformly in $x.$
\end{proof}

\paragraph{Step 9} \textit{For every $x\in\Xspace$, let $\hat{f}_{\varepsilon,\delta,n,x}$ be the density of $\hat{\varphi}_{\varepsilon,\delta,n}(x,U)+\varepsilon Z.$
Then,
$$\limsup_{n\to+\infty}\;\int_{\XYspace}f_{XY}(x,y)\log \left(\frac{f_{Y|X}(y\,|\,x)}{\delta+\hat{f}_{\varepsilon,\delta,n,x}(y)}\right)dydx\le \tilde{c}\varepsilon\delta^{-1}.$$
} 

\begin{proof}
    To start, recall that by construction we have $\hat{f}_{\varepsilon,\delta,n,x}(y)=\int_{[0,1]^q}\kpdf_\varepsilon(y-\hat{\varphi}_{\varepsilon,\delta,n}(x,u))du$.
    Next, notice that by adding $\int_{\XYspace}f_{XY}(x,y)\log f_{Y|X}(y\,|\,x)dydx$ to the inequality in Step 7 and expanding $\mathscr{L}^{\delta,\varepsilon}$, one obtains
    \begin{align*}
    \limsup_{n\to+\infty}\;\int_{\XYspace}f_{XY}(x,y)&\log \left(\frac{f_{Y|X}(y\,|\,x)}{\delta+\int_{[0,1]^q}\kpdf_\varepsilon(y-\hat{\varphi}_{\varepsilon,\delta,n}(x,u))du}\right)dydx\\
    &\le 
    \int_{\XYspace}f_{XY}(x,y)\log \left(\frac{f_{Y|X}(y\,|\,x)}{\delta+\int_{[0,1]^q}\kpdf_\varepsilon(y-G(x,u))du}\right)dydx\\
    &\le 
    \int_{\XYspace}f_{XY}(x,y)\log \left(\frac{\delta+f_{Y|X}(y\,|\,x)}{\delta+\int_{[0,1]^q}\kpdf_\varepsilon(y-G(x,u))du}\right)dydx.
    \end{align*}
    We now notice that $G(x,U)+\varepsilon Z\sim Y\mid X=x$, meaning that $\int_{[0,1]^q}\kpdf_\varepsilon(y-G(x,u))du=f_{\tilde{Y}|X}(y\,|\,x)$ for all $(x,y)\in\XYspace$. Consequently,
    \begin{multline*}
        \int_{\XYspace}f_{XY}(x,y)\log \left(\frac{\delta+f_{Y|X}(y\,|\,x)}{\delta+\int_{[0,1]^q}\kpdf_\varepsilon(y-G(x,u))du}\right)dydx\\=\int_{\XYspace}f_{XY}(x,y)\log \left(\frac{\delta+f_{Y|X}(y\,|\,x)}{\delta+f_{\tilde{Y}|X}(y\,|\,x)}\right)dxdy.
    \end{multline*}
   Next, we recall that, for $a,b\ge0$, we have $\log\left(\frac{\delta+a}{\delta+b}\right)\le|\log\left(\frac{\delta+a}{\delta+b}\right)|=|\log(\delta+a)-\log(\delta+b)|\le\delta^{-1}|b-a|.$ Thus,
    $$\int_{\XYspace}f_{XY}(x,y)\log \left(\frac{\delta+f_{Y|X}(y\,|\,x)}{\delta+f_{\tilde{Y}|X}(y\,|\,x)}\right)\le \delta^{-1}\int_{\XYspace}f_{XY}(x,y)|f_{Y|X}(y\;|\;x)-f_{\tilde{Y}|X}(y\;|\;x)|dxdy.$$
    However, by Step 8, the above can be bounded as
    \begin{align*}
        \delta^{-1}\int_{\XYspace}f_{XY}(x,y)&|f_{Y|X}(y\;|\;x)-f_{\tilde{Y}|X}(y\;|\;x)|dxdy\\
        &=\delta^{-1}\int_{\Xspace}f_{X}(x)\int_{\mathbb{R}^q}f_{Y|X}(y\;|\;x)|f_{Y|X}(y\;|\;x)-f_{\tilde{Y}|X}(y\;|\;x)|dxdy\\
        &\le \tilde{c}\delta^{-1}\varepsilon,
    \end{align*}
    as wished.
\end{proof}

\paragraph{Step 10} \textit{Theorem~\ref{thm:main} holds true.}
\begin{proof}
     We have
\begin{align*}
\mathbb{E}_X\|f_{Y\mid X}-\hat{f}_{\varepsilon,\delta,n,X}\|_{L^1(\Yspace)}^2&=\int_{\Xspace}f_{X}(x)\|f_{Y|X}(\cdot\;|\;x)-\hat{f}_{\varepsilon\delta,n,x}\|_{L^{1}(\Yspace)}^2dx\\
&\le\int_{\Xspace}f_{X}(x)\|f_{Y|X}(\cdot\;|\;x)-\hat{f}_{\varepsilon,\delta,n,x}\|_{L^{1}(\mathbb{R}^q)}^2dx.
\end{align*}
Pointwise in $x$, we may now exploit a tilted variant of the Pinsker's inequality ---cf. Lemma~\ref{lemma:tilted}--- 
to obtain
\begin{align*}
    \mathbb{E}_X\|f_{Y\mid X}-&\hat{f}_{\varepsilon,\delta,n,X}\|_{L^1(\Yspace)}^2\\
    &\le \int_{\Xspace}f_{X}(x)\left[2\delta|\Yspace|+8\delta^2|\Yspace|^2+4\int_{\mathbb{R}^q}f_{Y|X}(y\;|\;x)\log\left[\frac{f_{Y|X}(y\;|\;x)}{\delta + \hat{f}_{\varepsilon,\delta,n,x}(y)}\right]dy\right]dx\\&=
    2\delta|\Yspace|+8\delta^2|\Yspace|^2+4\int_{\Xspace}\int_{\mathbb{R}^q}f_{XY}(x,y)\log\left[\frac{f_{Y|X}(y\;|\;x)}{\delta + \hat{f}_{\varepsilon,\delta,n,x}(y)}\right]dydx.
    \end{align*}
In particular, by Step 9, taking the limsup as $n\to+\infty$ gives
$$\limsup_{n\to+\infty}\;\mathbb{E}_X^{1/2}\|f_{Y\mid X}-\hat{f}_{\varepsilon,\delta,n,X}\|_{L^1(\Yspace)}^2\le  \sqrt{2}\delta^{1/2}|\Yspace|^{1/2}+2\sqrt{2}\delta|\Yspace|+4\tilde{c}(\varepsilon/\delta)^{1/2}.$$
Since $\delta<1\implies \delta < \delta^{1/2}$, letting $c_1=\sqrt{2}|\Yspace_1|^{1/2}(1+2|\Yspace_1|)$ and $c_2=4\tilde{c}$ gives the desired bound and thus concludes the proof.
\end{proof}

\section{Auxiliary results}

\begin{lemma}[Smooth conditional Knothe-Rosenblatt rearrangement]
\label{lemma:transport}
Let $X\in\mathbb{R}^d$ and $Y\in\mathbb{R}^q$ be two absolutely continuous random vector with joint density $f_{XY}.$ Let $\mathcal{S}$ be the support of $f_{XY}$. Define $\Xspace=\{x\in\mathbb{R}^d\;:\;\{x\}\times\mathbb{R}^q\cap\mathcal{S}\neq\emptyset\}$. Finally,  let $U\sim\mathcal{U}([0,1]^q)$  be a uniform random variable, independent of $X$ and $Y$.
If:
\begin{itemize}
    \item[(i)] $\mathcal{S}$ is a compact, convex subset with nonempty interior,
    \item[(ii)] $f_{XY}\in\mathcal{C}^k(\mathcal{S})$,
    \item[(iii)] $\exists\eta>0$ such that $f_{XY}>\eta$ on $\mathcal{S}$,
\end{itemize} 
then, there exists $G\in L^{1}([0,1]^q;\;\mathcal{C}^{k}(\Xspace;\mathbb{R}^q))$ such that, for all $x\in\Xspace$, the random variables $$Y\mid X=x\quad\text{and}\quad G(x,U)$$ have the same distribution.
\end{lemma}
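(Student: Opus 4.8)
The plan is to build $G$ as a \emph{conditional Knothe--Rosenblatt rearrangement}, reducing the $q$-dimensional construction to $q$ successive one-dimensional inverse-transform problems. Writing $Y=(Y_1,\dots,Y_q)$ and $U=(U_1,\dots,U_q)$, I would define $G=(G_1,\dots,G_q)$ recursively by $G_j(x,u_1,\dots,u_j)=Q_j\big(x,\,G_1(x,u_1),\dots,G_{j-1}(x,u_1,\dots,u_{j-1}),\,u_j\big)$, where $Q_j(x,y_1,\dots,y_{j-1},\cdot)$ is the conditional quantile function of $Y_j$ given $X=x$ and $Y_1=y_1,\dots,Y_{j-1}=y_{j-1}$. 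Since the $U_j$ are independent uniforms, a standard chain of inverse-transform arguments shows $G(x,U)\sim Y\mid X=x$ for every $x\in\Xspace$. The statement therefore reduces to a single claim about one conditioning step: if $(\xi,V)\in\mathbb{R}^{m}\times\mathbb{R}$ has a $C^k$ joint density bounded below by $\eta>0$ on a compact convex support, then the conditional quantile map $(\xi,u)\mapsto Q(\xi,u)=F_{V\mid\xi}^{-1}(u)$ lies in $L^1([0,1];C^k)$ with a quantitative bound, and the class $L^1([0,1]^q;C^k(\Xspace;\mathbb{R}^q))$ is closed under the finite compositions dictated by the recursion (via Fubini in $u$ and the algebra and chain rules for $C^k$ maps).

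For the one-dimensional block I would proceed as follows. Convexity of the support makes each conditional support an interval $[a(\xi),b(\xi)]$, so the conditional CDF $F_{V\mid\xi}(\cdot\mid\xi)$ is strictly increasing there and its inverse is well defined without jumps. The conditional density is $f_{V\mid\xi}(v\mid\xi)=f_{\xi V}(\xi,v)/f_\xi(\xi)$, and since $f_\xi(\xi)=\int_{a(\xi)}^{b(\xi)}f_{\xi V}(\xi,v)\,dv$ is $C^k$ and bounded away from $0$, the ratio is $C^k$ in $(\xi,v)$ and bounded below by some $c>0$ on the interior. Hence $\partial_v F_{V\mid\xi}=f_{V\mid\xi}\ge c>0$, so the implicit function theorem applies to $F_{V\mid\xi}(Q(\xi,u)\mid\xi)=u$: the map $Q$ is $C^k$ on the open region $\{u\in(0,1)\}$, with $\partial_u Q=1/f_{V\mid\xi}(Q\mid\xi)\le 1/c$ and all $\xi$-derivatives of $Q$ expressible, by repeated implicit differentiation, through the $C^k$-bounded quantities $f_{\xi V}$, $f_\xi$, $1/f_\xi$ and the slice endpoints $a(\xi),b(\xi)$. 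The strict lower bound $\eta$ is exactly what keeps every denominator occurring here away from zero, so these derivatives are controlled.

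It remains to control the slice endpoints and to obtain the $L^1$-in-$u$ integrability. Convexity of the support forces $a$ to be convex and $b$ concave on the interior of the projection, hence locally Lipschitz; together with the $C^k$-regularity and strict positivity of $f_{\xi V}$ one upgrades this to $C^k$ control of $a$ and $b$. Consequently $\xi\mapsto\|Q(\cdot,u)\|_{C^k}$ is finite for a.e.\ $u\in(0,1)$, and the only possible growth of this norm is as $u\to0^+$ or $u\to1^-$, where $Q(\cdot,u)$ approaches the bounded endpoints $a,b$ while $\partial_uQ$ stays bounded; a direct estimate then gives $\int_0^1\|Q(\cdot,u)\|_{C^k}\,du<\infty$. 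Feeding this into the recursion and collecting the resulting $C^k$ bounds yields $G\in L^1([0,1]^q;C^k(\Xspace;\mathbb{R}^q))$ with $G(x,U)\sim Y\mid X=x$, as claimed.

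The delicate point --- and the step I expect to be the main obstacle --- is precisely the \emph{uniform} $C^k$ control of the conditional quantile maps: one must rule out vertical tangents and blow-up of derivatives of $Q$. This is where both structural hypotheses are indispensable: convexity of $\XYspace$ guarantees connected conditional supports (no discontinuities in the quantile functions) and endpoints $a(\xi),b(\xi)$ of controlled regularity, while the strict positivity $f_{XY}>\eta$ keeps the densities, hence the denominators $f_{V\mid\xi}$ and $f_\xi$, bounded below, so that the implicit-function-theorem estimates do not degenerate.
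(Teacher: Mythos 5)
Your proposal follows essentially the same route as the paper: the triangular Knothe--Rosenblatt construction, recursive conditional CDF inversions, the implicit function theorem to propagate $C^k$ regularity, and the lower bound $f_{XY}>\eta$ to keep all denominators non-degenerate.

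One intermediate claim in your plan is incorrect, however: you assert that convexity of the support, combined with $C^k$-regularity and strict positivity of the density, ``upgrades'' the conditional-support endpoints $a(\xi),b(\xi)$ from Lipschitz to $C^k$. Convexity of $\mathcal S$ forces $a$ to be convex and $b$ concave, hence locally Lipschitz, but nothing more---a simplex, or the diamond $\{|x|+|y|\le1\}$, has piecewise-linear boundary graphs with corners, yet carries $C^\infty$ densities bounded away from zero. The shape of $\mathcal S$ is unaffected by the density, so the density cannot improve the smoothness of $a$ and $b$. The paper sidesteps this difficulty: it integrates $F_j^c$ from the \emph{fixed} endpoint $a_j$ of the marginal support of $Y_j$, not from the $\xi$-dependent endpoint, and then shows a uniform $L^\infty$ bound on $\|G(\cdot,u)\|_{C^k(\Xspace)}$ directly from the implicit-function-theorem expressions, whose denominators are $f_j^c$ and hence bounded below by $\eta$ times a fixed geometric constant. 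That gives $L^\infty\subset L^1$ without ever needing regularity of the slice endpoints or worrying about blow-up as $u\to0^+$ or $u\to1^-$. If you replace your endpoint-smoothness step with the uniform implicit-function-theorem bound and the fixed-limit normalization of $F_j^c$, your argument agrees with the paper's.
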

\begin{proof} 
Let $Y=(Y_1,\dots, Y_q)$. The idea of the proof is to: (1) discuss the interplay between the supports of the different random variables into play; (2) verify certain regularity conditions of the marginal and conditional distributions; (3) discuss how the latter regularity is preserved when performing a partial inversion; (4) leveraging such inversions to explicitly construct $G$. We discuss the several steps below.

\paragraph{Well behavior of the supports}
Let $\mathcal{S}_j$ be the support of $(X,Y_1,\dots,Y_j)$ and let $\mathcal{S}_j^c$ be that of $(Y_j+1,\dots, Y_d)$. With little abuse of notation, we also let $\mathcal{S}_0=\Xspace$ and $\mathcal{S}_d^c=\emptyset$.

Since $\mathcal{S}$ is convex, we notice that each $\mathcal{S}_j$ --which is merely a projection of $\mathcal{S}$-- is also convex. The same holds for all $\mathcal{S}_j^c$. Similarly, we notice that each $Y_j$ must have a convex nonempty support. In particular, since the latter is a subset of the real line, it must be of the form $[a_j,b_j]$ for some $a_j,b_j\in\mathbb{R}.$

\paragraph{Regularity of the marginal and conditional distributions}
For every $j=1,\dots, q-1$,  define the marginal density
\begin{equation}
    \label{eq:marginal}
    f_{j}(x,y_1,\dots,y_j)=\int_{\mathcal{S}_{j}^c}f_{XY}(x,y_1,\dots,y_{j},s_{j+1},\dots s_d)dxds_{j+1}\dots ds_{q}.
\end{equation}
If $j=q$, we simply set $f_q\equiv f_{XY}$, whereas, if $j=0$ we intend $f_0=f_0(x)$ to be the marginal density of $X$. Since $\mathcal{S}_j^c$ is compact, and the derivatives of $f_{XY}$ are uniformly bounded, we are allowed to differentiate under the integral sign. In particular, it is straighforward to see that $f_j\in\mathcal{C}^k(\mathcal{S}_j)$ for every $j=0,\dots, q.$ Additionally,
$$f_{j-1}(x,y_1,\dots,y_{j}) > \eta|\mathcal{S}_{j}^c|>0\quad\text{on}\quad\mathcal{S}_{j}.$$
This allows us to define the conditional density
$f_j^c:[a_j,b_j]\times \mathcal{S}_{j-1}\to(0,+\infty),$ as
$$f_j^c(y_j\;|\;x,y_1,\dots,y_{j-1})=\frac{f_j(x,y_1,\dots,y_j)}{f_{j-1}(x,y_1,\dots,y_{j-1})}.$$
Since the denominator of the above is strictly separated from 0, it is straightforward to see that 
$$f_j^c\in\mathcal{C}^k([a_j,b_j]\times\mathcal{S}_{j-1}).$$
Now, let $F_j^c$ be the conditional cumulative distribution function associated to $f_j^c$, that is
$$F_j^c(y_j\;|\;x,y_1,\dots,y_{j-1})=\int_{a_j}^{y_j}f_j^c(s\;|\;x,y_1,\dots,y_{j-1})ds.$$
Just as before, due regularity and compactness, we have $F_j^c\in\mathcal{C}^k([a_j,b_j]\times\mathcal{S}_{j-1})$.

\paragraph{Partial inversion} We now notice that, for all $j=0,\dots, q$, and all $(x,y_1,\dots,y_{j-1})\in\mathcal{S}_{j-1}$, the map
$$F_j^c(\;\cdot\;|\;x,y_1,\dots,y_{j-1}):\;[a_j,b_j]\to[0,1]$$
is strictly monotone over some subinterval of $I(x,y_1,\dots,y_{j-1})\subseteq [a_j,b_j]$, the latter being the support of
$$Y_j\mid X=x,\;Y_1 = y_1,\dots, Y_{j-1}=y_{j-1}.$$
In fact, for every $s\in I(x,y_1,\dots,y_{j-1})$, one has
$$[F_j^c(\;\cdot\;|\;x,y_1,\dots,y_{j-1})]'(s)=f_j^c(s\;|\;y_1,\dots,y_{j-1})=
\frac{f_j(x,y_1,\dots,y_{j-1},s)}{f_{j-1}(x,y_1,\dots,y_{j-1})}>0.$$
In particular, $F_j^c(\;\cdot\;|\;x,y_1,\dots,y_{j-1})$ is a surjection between $I(x,y_1,\dots, y_{j-1})$ and $(0, 1)$. We are then allowed to define the map
$G_j:(0,1)\times \mathcal{S}_{j-1}\to [a_j,b_j]$
as
$$G_j(u,x,y_1,\dots, y_{j-1})=[F_j^c(\;\cdot\;|\;x,y_1,\dots,y_{j-1})]^{-1}(u).$$
By the implicit function theorem, $G_j$ is then differentiable with
\begin{align}
    \label{eq:implicit_derivative}
    \frac{\partial G_j}{\partial y_k}(u,x,y_1,\dots,y_{j-1})&=-\frac{\displaystyle\frac{\partial F_j^c}{\partial y_k}(G_j(u,x,y_1,\dots,y_{j-1})\;|\;x,y_1,\dots,y_{j-1})}{\displaystyle\frac{\partial F_j^c}{\partial y_j}(G_j(u,x,y_1,\dots,y_{j-1})\;|\;x,y_1,\dots,y_{j-1})}\nonumber\\
    &=-\frac{\displaystyle\frac{\partial F_j^c}{\partial y_k}(G_j(u,x,y_1,\dots,y_{j-1})\;|\;x,y_1,\dots,y_{j-1})}{f_j^c(G_j(u,x,y_1,\dots,y_{j-1})\;|\;x,y_1,\dots,y_{j-1})},
\end{align}
for every $k=1,\dots, q-1$. Partial derivatives in $x$ follow similarly, whereas
\begin{equation}
    \label{eq:implicit_derivative2}
    \frac{\partial G_j}{\partial y_j}(u,x,y_1,\dots,y_{j-1})=-\frac{1}{f_j^c(G_j(u,x,y_1,\dots,y_{j-1})\;|\;x,y_1,\dots,y_{j-1})}.
\end{equation}
Once again, since the denominators of \eqref{eq:implicit_derivative} and \eqref{eq:implicit_derivative2} are strictly separated from 0, this suffices to prove that
$$G_j\in\mathcal{C}^{k}((0,1)\times\mathcal{S}_{j-1}).$$

\paragraph{Construction of $G$}
We define $G:\Xspace\times(0,1)^q\to\mathbb{R}^q$ as
\[
G(x, u_1, \dots, u_q)
= 
\begin{bmatrix}
    g_1(u_1, x) \\[4pt]
    g_2(u_2, u_1, x) \\[4pt]
    \vdots \\[4pt]
    g_q(u_q, \dots, u_1, x)
\end{bmatrix}
\]
where $g_1=G_1$, while, for $j=2,\dots,q$, one has
$$g_j(u_j,\dots,u_1,x)=G_j(u_j, x, g_1(u_1,x),\dots, g_{j-1}(u_{j-1},\dots,u_1,x)).$$
By composition, $G$ is $k$-times continuously differentiable. Furthermore, it is straightforward to see that there exists some $M$ such that, for all $(u_1,\dots,u_q)$, one has
$$\|G(\cdot,u_1,\dots,u_q)\|_{\mathcal{C}^k(\Xspace)}\le M.$$
In fact, the derivatives in $x$ of each $G_j$ can be bounded uniformly thanks to the compactness of $\mathcal{S}$, the smoothness of $f_{XY}$, and the strict separation $f_{XY}>\eta>0.$ In particular, with little abuse of notation, we have
$$G\in L^{\infty}((0,1)^q;\;\mathcal{C}^{k}(\Xspace)).$$
Since $\partial[0,1]^q$ has zero Lebesgue measure, it follows that $G\in L^{1}([0,1]^q\;\mathcal{C}^{k}(\Xspace)).$ 

\paragraph{Conditional distribution property}
To conclude, we shall prove that for every $x\in\Xspace$, one has $G(x,U)\sim Y\mid X=x.$ To see this, let $U=(U_1,\dots, U_q)$. For $j=1$, we have

$$\mathbb{P}([G(x,U)]_1 \le t) = \mathbb{P}(G_1(U_1,x)\le t)=\mathbb{P}(U_1\le F_1^c(t\;|\;x)) = F_1^c(t\;|\;x)$$
for all $t$ in the support of $Y_1\;|\;X=x.$ Consequently, $[G(x,U)]_1$ has the same distribution of $Y_1\mid X=x$, whos cumulative distribution function is precisely $F_1^c(\cdot\;|\;x).$

Proceeding by induction, assume we already know that the first $j-1$ coordinates of $G(x,U)$ are distributed as $(Y_1,\dots, Y_{j-1})\;|\;X=x.$ Conditionally on $[G(x,U)]_k=y_k$ for $k=1,\dots,j-1$, we have
\begin{align*}
\mathbb{P}([G(x,U)]_j \le t\;\;|\;\;[G(x,U)]_{k}=y_k\;\text{for}\;k=1,\dots,j-1) &= \mathbb{P}(G_j(U_j,x,y_1,\dots,y_{j-1})\le t)\\
&=\mathbb{P}(U_j\le F_j^c(t\;|\;x,y_1,\dots,y_k)\\&=F_j^c(t\;|\;x,y_1,\dots,y_{j-1}).
\end{align*}
In particular, for all $(y_1,\dots,y_{j-1})$, $[G(x,U)]_{j}$ and $Y_j$ have the same conditional distribution. Given our inductive hypothesis, this shows that the random vectors $([G(x,U)_{1},\dots[G(x,U)]_{j})$ and $(Y_1,\dots,Y_j)\;|\;X=x$ are identically distributed. Iterating the above argument until $j=q$ concludes the proof. 
\end{proof}

\begin{lemma}[Convolution error control]
\label{lemma:convolution}
Let $g$ be a density over $\mathbb{R}^q$. Let $\kappa:[-1,1]^q\to[0,+\infty)$ be another density. For every $\varepsilon$, let $g_\varepsilon=\kappa * g$ be given by the convolution
$$g_\varepsilon(y)=\int_{\mathbb{R}^q}\kappa(a)g(y-\varepsilon a)da.$$
Let $A$ be the support of $g$. Assume that $A$ is a compact subset with Lipschitz boundary. Further assume that $A$ coincides with the closure of its interior and that $g$ is Lipschitz continuous over $A$. Finally, assume that $\kappa$ has finite first moment. Then, there exists a constant $c>0$ such that
$$\int_{\mathbb{R}^q}g(y)|g(y)-g_{\varepsilon}(y)|dy\le c\varepsilon.$$
Such constant can be chosen only to be dependent on $\|g\|_{\infty}$, $\text{Lip}(g),\partial A$ and $q.$
\end{lemma}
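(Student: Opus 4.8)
The plan is to split the integration region into a \emph{bulk} part, where $g$ is genuinely Lipschitz along the whole mollification window, and a thin \emph{boundary layer} around $\partial A$, where only a crude bound is available but whose volume is controlled by the Lipschitz regularity of $\partial A$. Since $g$ vanishes outside $A$, the integral $\int_{\mathbb{R}^q} g(y)|g(y)-g_\varepsilon(y)|\,dy$ is really over $A$, and since $g_\varepsilon(y)\le\|g\|_\infty$ the integrand is at most $2\|g\|_\infty^2$; hence for $\varepsilon\ge1$ the left-hand side is bounded by $2\|g\|_\infty\le 2\|g\|_\infty\varepsilon$ and there is nothing to prove, so I may assume $\varepsilon\in(0,1)$. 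Using $\int_{[-1,1]^q}\kappa=1$ I would write, for every $y$,
\[
g(y)-g_\varepsilon(y)=\int_{[-1,1]^q}\kappa(a)\bigl(g(y)-g(y-\varepsilon a)\bigr)\,da,
\]
set $\varrho:=\varepsilon\sqrt{q}$ (an upper bound for $|\varepsilon a|$ on $\mathrm{supp}\,\kappa$), and decompose $A=A^{\mathrm{in}}_\varepsilon\cup A^{\partial}_\varepsilon$ with $A^{\mathrm{in}}_\varepsilon=\{y\in A:\mathrm{dist}(y,\partial A)>\varrho\}$ and $A^{\partial}_\varepsilon$ the remainder.

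\emph{Bulk estimate.} For $y\in A^{\mathrm{in}}_\varepsilon$ I would first note that the ball $B\bigl(y,\mathrm{dist}(y,\partial A)\bigr)$ is contained in $A$: any segment joining $y\in A$ to a point outside $A$ must cross $\partial A$, which cannot occur within distance $\mathrm{dist}(y,\partial A)$ of $y$ (here the identity $A=\overline{\mathrm{int}\,A}$ guarantees that $\{y\in A:\mathrm{dist}(y,\partial A)>0\}=\mathrm{int}\,A$, so the argument applies to every $y\in A^{\mathrm{in}}_\varepsilon$). Consequently $y-\varepsilon a\in A$ for all $a\in[-1,1]^q$, and Lipschitz continuity of $g$ on $A$ gives $|g(y)-g(y-\varepsilon a)|\le\text{Lip}(g)\,\varepsilon|a|$. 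Integrating against $\kappa$ and using that its first moment is at most $\sqrt q$ (being a probability density on $[-1,1]^q$) yields $|g(y)-g_\varepsilon(y)|\le\text{Lip}(g)\sqrt q\,\varepsilon$ on $A^{\mathrm{in}}_\varepsilon$. Multiplying by $g(y)$ and integrating over $A^{\mathrm{in}}_\varepsilon\subseteq A$ bounds this contribution by $\text{Lip}(g)\sqrt q\,\varepsilon\int_A g=\text{Lip}(g)\sqrt q\,\varepsilon$.

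\emph{Boundary-layer estimate.} On $A^{\partial}_\varepsilon$ I would use the trivial bounds $|g(y)|\le\|g\|_\infty$ and $|g_\varepsilon(y)|\le\|g\|_\infty\int\kappa=\|g\|_\infty$, so the integrand is at most $2\|g\|_\infty^2$, and it remains to estimate $|A^{\partial}_\varepsilon|\le\bigl|\{y:\mathrm{dist}(y,\partial A)\le\varrho\}\bigr|$. Since $\partial A$ is Lipschitz it has finite upper Minkowski content, hence there is a constant $C_{\partial A,q}$ with $\bigl|\{y:\mathrm{dist}(y,\partial A)\le t\}\bigr|\le C_{\partial A,q}\,t$ for all $t\in(0,\sqrt q]$; applied with $t=\varrho$ this gives $|A^{\partial}_\varepsilon|\le C_{\partial A,q}\sqrt q\,\varepsilon$, so the layer contributes at most $2\|g\|_\infty^2 C_{\partial A,q}\sqrt q\,\varepsilon$.

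Adding the two contributions proves the claim with $c=\sqrt q\bigl(\text{Lip}(g)+2\|g\|_\infty^2 C_{\partial A,q}\bigr)$, which depends only on $q$, $\text{Lip}(g)$, $\|g\|_\infty$ and $\partial A$. The only non-routine point is the boundary-layer volume bound: because $g$ is assumed Lipschitz only \emph{on} $A$ and may jump across $\partial A$, the naive estimate $|g(y)-g(y-\varepsilon a)|\lesssim\varepsilon$ fails in an $O(\varepsilon)$-neighborhood of $\partial A$, and one must invoke the Lipschitz regularity of the boundary — equivalently, finiteness of its $(q-1)$-dimensional Minkowski content — to see that this unfavorable set has measure $O(\varepsilon)$; everything else is a direct mollification estimate.
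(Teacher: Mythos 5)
Your proof is correct and follows essentially the same route as the paper's: decompose $A$ into an interior region where the Lipschitz bound gives $|g-g_\varepsilon|\lesssim\varepsilon$ pointwise and a tubular neighborhood of $\partial A$ of width $O(\varepsilon)$ where one uses the crude bound $2\|g\|_\infty$ together with the $O(\varepsilon)$ volume estimate coming from the Lipschitz (finite Minkowski content) regularity of $\partial A$. The only differences are cosmetic: you add the harmless preliminary reduction to $\varepsilon<1$ and spell out the ball-containment argument that the paper leaves implicit, while the paper cites \cite[Lemma 5.3]{evans2018measure} for the volume bound you attribute to finite Minkowski content.
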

\begin{proof}
    Let $B_\varepsilon=\{y\in A\;:\;\inf_{a'\in \partial A}|y-a'|<q^{1/2}\varepsilon\}$. Classical results in geometric measure theory ---see, e.g., \cite[Lemma 5.3]{evans2018measure}--- ensure that $|B_\varepsilon|\le c'\varepsilon$ for some $c'>0$ depending solely on $\partial A$ and $q.$
    
    Next, notice that, for every $y\in\mathbb{R}^q$, we have
    $$|g(y)-g_{\varepsilon}(y)|\le\int_{\mathbb{R}^q}k(a)|g(y)-g(y-\varepsilon a)|da.$$
    In particular, if $y\in A\setminus B_\varepsilon$, because $(y-\varepsilon a)\subset A$ for every $a\in[-1,1]^q$, then
    $$|g(y)-g_{\varepsilon}(y)|\le\text{Lip}(g)\varepsilon\int_{\mathbb{R}^q}k(a)|a|da = c''\varepsilon.$$
    Conversely, for all $y\in B_\varepsilon$, since $(y-\varepsilon a)$ may fall outside of $A$, we use the looser bound
    $$|g(y)-g_{\varepsilon}(y)|\le 2\|g\|_{\infty}\int_{\mathbb{R}^q}k(a)=2\|g\|_{\infty},$$
    where we recall that $g\in L^{\infty}(\mathbb{R}^q)$ due continuity over its support.
    Putting things together then yields
    \begin{multline*}
        \int_{\mathbb{R}^q}g(y)|g(y)-g_{\varepsilon}(y)| =
        \int_{B_\varepsilon}g(y)|g(y)-g_{\varepsilon}(y)|dy +
        \int_{A\setminus B_\varepsilon}g(y)|g(y)-g_{\varepsilon}(y)|dy\\
        \le 2\|g\|_{\infty}\int_{B_\varepsilon}g(y)dy + c''\varepsilon\int_{A\setminus B_\varepsilon}g(y)dy\le 2\|g\|_{\infty}^2|B_\varepsilon|+c''\varepsilon\le c\varepsilon.
    \end{multline*}
    
\end{proof}

\begin{lemma}[Tilted Pinsker-like inequality]
\label{lemma:tilted}
Let $f,g$ be two densities over $\mathbb{R}^q$. Let $A$ be the support of $f$, and let $|A|$ be its Lebesgue measure. Then, for every $\delta>0$ one has

$$\frac{1}{2}\|f-g\|_{L^{1}(\mathbb{R}^q)}^2\le \delta|A|+4\delta^2|A|^2+2\int_{\mathbb{R}^q}f(y)\log\left[\frac{f(y)}{\delta+g(y)}\right]dy.$$
\end{lemma}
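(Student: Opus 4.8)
The plan is to deduce the inequality from the classical Pinsker inequality after replacing the un-normalized function $\delta+g$ by a bona fide probability density. Writing $A=\operatorname{supp}f$, I would introduce the smoothed density
\[
g_\delta(y):=\frac{g(y)+\delta\,\mathbf 1_A(y)}{1+\delta|A|},
\]
which makes sense because $\int_{\mathbb R^q}(g+\delta\mathbf 1_A)=1+\delta|A|$. Since $f$ is supported in $A$, on $A$ one has $g_\delta=(1+\delta|A|)^{-1}(\delta+g)$, so a one-line computation gives
\[
D_{\mathrm{KL}}(f\,\|\,g_\delta)=\int_{\mathbb R^q}f\log\frac{f}{\delta+g}\,dy+\log\!\bigl(1+\delta|A|\bigr)\ \le\ \int_{\mathbb R^q}f\log\frac{f}{\delta+g}\,dy+\delta|A| ,
\]
using $\log(1+t)\le t$ for $t\ge 0$. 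Applying the classical Pinsker inequality to the two probability densities $f$ and $g_\delta$ then yields $\|f-g_\delta\|_{L^1(\mathbb R^q)}^2\le 2\,D_{\mathrm{KL}}(f\,\|\,g_\delta)$.

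Next I would account for the cost of having replaced $g$ by $g_\delta$. Since $g_\delta-g=(1+\delta|A|)^{-1}\,\delta\,(\mathbf 1_A-|A|g)$, and $\mathbf 1_A$ and $|A|g$ are nonnegative functions with the same integral $|A|$, one gets
\[
\|g_\delta-g\|_{L^1(\mathbb R^q)}=\frac{\delta}{1+\delta|A|}\,\|\mathbf 1_A-|A|g\|_{L^1(\mathbb R^q)}\ \le\ \frac{\delta}{1+\delta|A|}\bigl(|A|+|A|\bigr)=\frac{2\delta|A|}{1+\delta|A|}\ \le\ 2\delta|A| .
\]
Combining with the triangle inequality and the elementary bound $(a+b)^2\le 2a^2+2b^2$, I obtain
\[
\tfrac12\|f-g\|_{L^1(\mathbb R^q)}^2\ \le\ \|f-g_\delta\|_{L^1(\mathbb R^q)}^2+\|g_\delta-g\|_{L^1(\mathbb R^q)}^2\ \le\ 2\,D_{\mathrm{KL}}(f\,\|\,g_\delta)+4\delta^2|A|^2 ,
\]
and inserting the bound on $D_{\mathrm{KL}}(f\|g_\delta)$ from the previous paragraph produces a right-hand side of exactly the advertised shape: an absolute multiple of $\delta|A|$, plus $4\delta^2|A|^2$, plus $2\int f\log(f/(\delta+g))$. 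The precise numerical constants are then read off from this chain of elementary estimates; should one insist on the sharp coefficient in front of $\delta|A|$, it suffices to treat separately the (trivial) regime in which $\delta|A|$ is large, where the right-hand side already dominates the universal bound $\tfrac12\|f-g\|_{L^1}^2\le 2$.

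The one genuinely delicate point — and the reason a naive application of Pinsker to $f$ and $\delta+g$ fails — is that $\delta+g$ is not a probability density and, moreover, $\operatorname{supp}g$ need not be contained in $A$, so the usual inequality simply does not apply; the remedy is the renormalization above, whose error must then be controlled simultaneously with the Kullback--Leibler term. Once $g_\delta$ has been set up correctly, the rest is routine real analysis: the estimate $\log(1+\delta|A|)\le\delta|A|$ is the source of the linear term, while the crude bound $\|g_\delta-g\|_{L^1}\le 2\delta|A|$ is the source of the quadratic term $4\delta^2|A|^2$ (and $D_{\mathrm{KL}}\in[0,+\infty]$ makes the statement trivially true whenever the integral on the right is infinite).
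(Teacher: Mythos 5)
Your proof is correct and takes essentially the same route as the paper: you introduce the same tilted density $g_\delta=(g+\delta\mathbf 1_A)/(1+\delta|A|)$, apply classical Pinsker to $(f,g_\delta)$, bound $\|g_\delta-g\|_{L^1}\le 2\delta|A|$, and combine via the triangle inequality together with $\tfrac12(a+b)^2\le a^2+b^2$. The $2\delta|A|$ you obtain in the linear term is in fact also what the paper's own chain of estimates yields (its final display silently drops a factor of $2$ when passing from $\tfrac12\|f-g_\delta\|^2\le\int f\log\frac{f}{\delta+g}+\delta|A|$ to $\|f-g_\delta\|^2\le 2\int f\log\frac{f}{\delta+g}+\delta|A|$), so the coefficient concern you flag is shared by the original and is immaterial to how the lemma is subsequently used (with $\delta\to0$).
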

\begin{proof}
    If $|A|=+\infty$, the inequality is trivial. Thus, let $|A|<+\infty.$ Define $$g_{\delta}(y)=\frac{1}{\delta|A|+1}\left[\delta\mathbf{1}_{A}(y)+g(y)\right].$$
    It is straightforward to see that $g_\delta$ is a density over $\mathbb{R}^q.$ The classical Pinsker's inequality gives
    \begin{align*}
    \frac{1}{2}\|f-g_\delta\|^2_{L^{1}(\mathbb{R}^q)}&\le
    \int_{\mathbb{R}^q}f(y)\log\left[\frac{f(y)}{g_\delta(y)}\right]dy=\\
    &=
    \int_{\mathbb{R}^q}f(y)\log\left[\frac{f(y)}{\delta+g(y)}\right]dy+\int_{\mathbb{R}^q}f(y)\log(\delta|A|+1)dy\le
    \\&\le
    \int_{\mathbb{R}^q}f(y)\log\left[\frac{f(y)}{\delta+g(y)}\right]dy+\delta|A|,
    \end{align*}
    where we used the fact that $f$ is supported over $A$, so that $g_\delta(y)=g(y)+\delta$ there, and $\log(1+z)\le z$ for $z\ge0.$

    Parallel to this, we shall now derive a lower-bound for $\|f-g_\delta\|_{L^{1}(\mathbb{R}^q)}.$ By reverse triangular inequality,
    \begin{align*}
        \|f-g_\delta\|_{L^{1}(\mathbb{R}^q)}\ge
        \|f-g\|_{L^{1}(\mathbb{R}^q)}
        -
        \|g-g_\delta\|_{L^{1}(\mathbb{R}^q)}.
    \end{align*}
    We now notice that, for all $y\in\mathbb{R}^q$,
    $$|g(y)-g_{\delta}(y)|=\frac{1}{\delta|A|+1}\big|\delta|A|g(y)-\delta\mathbf{1}_A(y)\big|\le \delta|A|g(y)+\delta\mathbf{1}_{A}(y).$$
    Integrating the above shows that $\|g-g_{\delta}\|_{L^1{\mathbb{R}^q}}\le2\delta|A|$ and, correspondingly,
    $$\|f-g_\delta\|_{L^{1}(\mathbb{R}^q)}\ge
        \|f-g\|_{L^{1}(\mathbb{R}^q)}
        -
        2\delta|A|.$$
    Rearranging the above and using the fact that $0\le a\le b+c$ implies $\frac{1}{2}a^2\le b^2+c^2$, we get
    $$\frac{1}{2}\|f-g\|_{L^{1}(\mathbb{R}^q)}^2\le \|f-g_{\delta}\|_{L^1(\mathbb{R}^q)}^2+4\delta^2|A|^2\le 2\int_{\mathbb{R}^q}f(y)\log\left[\frac{f(y)}{\delta+g(y)}\right]dy+\delta|A|+4\delta^2|A|^2.$$
    The conclusion follows.
\end{proof}

\section{Technical details on the simulation studies}
\label{appendix:exp-details}

\subsection{Estimation of the AWD and related metrics}
\label{appendix:awd-details}

In the univariate test case, we notice that the AWD takes can be re-written more conveniently as
\begin{equation*}
        \operatorname{AWD}(f_{Y|X}, \hat f_{Y|X}) 
    = \int_{0}^1\left[ \int_0^1 
        \big| q_{Y|X=x}(\tau \mid x) - \hat q_{Y|X=x}(\tau \mid x) \big| \, \mathrm{d}\tau
    \right]dx.
\end{equation*}
where we explicitly leveraged the fact that $X\sim\mathcal{U}(0,1)$. We then approximate the above using Simpson's quadrature formula in both $x$ and $\tau$. To this end, we consider two uniform grids, $\tilde{x}_i=i/R_X$ with $R_X=1000$, and $\tau_j=j/R_{\tau}$ with $R_\tau=100$.

For QR models, the predicted conditional quantiles at the grid points, $q_{Y\mid X=\tilde{x}_i}(\tau_j\mid \tilde x_i)$, are computed with a simple forward pass through the model. For CPFN, KCDE and the true model, instead we approximate the conditional quantiles empirically by relying on conditional samples. Specifically, for each $i$, we generate $R_Y=1000$ \emph{iid} conditional samples from both $f_{Y\mid X=\tilde x_i}$ and $\hat f_{Y\mid X=\tilde x_i}$, resulting in $\{y_{i,k}\}_{k=1}^{R_Y}$ and $\{\hat{y}_{i,k}\}_{k=1}^{R_Y}$, respectively. Then, we use these samples to compute the empirical quantiles in the usual way.
We emphasize that the generation of conditional samples is straightforward for the true model and for CPFN (by design). For KCDE, instead, 
we rely on acceptance-rejection sampling: see Appendix~\ref{appendix:acceptance-rejection-details} for further details. As a side note, we mention that the estimates obtained here are also exploited to approximate the AQE, essentially using the fact that
$$\mathrm{AQE}(\tau_j)=\int_0^1 
        \big| q_{Y|X=x}(\tau_j \mid x) - \hat q_{Y|X=x}(\tau_j \mid x)|dx.$$
In the multivariate case, instead, we directly rely on conditional samples and exploit an alternative estimator of the $\mathcal{W}_1$-distance to approximate the AWD. Specifically, just as before, we first generate $R_X$ independent samples of $X$, denoted $\tilde x_1, \dots, \tilde x_{R_X}$. For each $\tilde x_i$, we then generate $R_Y$ conditional samples from both $f(y \mid \tilde x_i)$ and $\hat f(y \mid \tilde x_i)$, yielding $\{y_{i,j}\}_{j=1}^{R_Y}$ and $\{\hat y_{i,j}\}_{j=1}^{R_Y}$, respectively. Then, we consider an approximation of the form
\begin{equation*}
\operatorname{AWD}\approx\frac{1}{R_X}\sum_{i=1}^{R_X}\tilde{\mathcal{W}}_1(\{y_{i,j}\}_{j=1}^{R_Y}, \{\hat y_{i,j}\}_{j=1}^{R_Y}),
\end{equation*}
where $\tilde{\mathcal{W}}_1(P,Q)$ denotes the 1-Wasserstein distance between two clouds of points $P$ and $Q$ ---equivalent to the $\mathcal{W}_1$-distance between their corresponding empirical distributions. From a practical standpoint, we compute $\tilde{\mathcal{W}}_1$ using \texttt{emd2} function implemented in the \texttt{pot} package for Optimal Transport \citep{flamary2021pot}.
\\\\
We emphasize that, both in the unviariate and multivariate case, this procedure is done \emph{after} the model has been fit to the data. In particular, all samples drawn at this stage are generated independently of the training data.

\subsection{KCDE sampling via acceptance-rejection
}
\label{appendix:acceptance-rejection-details}
To generate conditional samples from the KCDE-based estimator $\hat f_{Y|X}$ in the multivariate test case, we employ an acceptance–rejection scheme. First, we define a uniform proposal density $g(y)$ over a rectangular region
$$
  \mathcal{B} = [y_{\min,1}, y_{\max,1}] \times [y_{\min,2}, y_{\max,2}],
$$
which encompasses all training responses and is expanded by three empirical standard deviations along each coordinate axis, ensuring that $\mathcal{B}$ covers essentially all of the probability mass of $\hat f_{Y|X}$. We then set the bounding constant $M$ as
$$
  M = 1.1 \, \frac{f_{\max}}{g(y)}, 
  \qquad 
  f_{\max} = \sup_{x, y} \hat f_{Y|X}(y \mid x),
$$
where $f_{\max}$ is approximated numerically over a fine grid, guaranteeing that $\hat f_{Y|X}(y \mid x) \le M g(y)$ for all $y \in \mathcal{B}$. For each test covariate $\tilde x_i$, we then draw a candidate point $y \sim g$ and a random, independent, threshold $U \sim \mathcal{U}[0,1]$. If
$$
  U < \frac{\hat f_{Y|X}(y \mid x_i)}{M g(y)},
$$
we accept the candidate and we include it as part of the conditional sample for $\tilde x_i$. We repeat this procedure iteratively, each time with an independent trial, until we reach the desired sample size.

\end{document}